\newcommand{\dtsn}{{ Differentiable Tree Search Network }} 
\theoremstyle{plain}
\newtheorem{theorem}{Theorem}[section]
\newtheorem{lemma}[theorem]{Lemma}
\theoremstyle{definition}
\theoremstyle{remark}
\icmltitlerunning{\dtsn}
\begin{document}

\twocolumn[
    \icmltitle{Differentiable Tree Search Network}
        
    \icmlsetsymbol{equal}{*}

    \begin{icmlauthorlist}
        \icmlauthor{Dixant Mittal}{nus}
        \icmlauthor{Wee Sun Lee}{nus}
    \end{icmlauthorlist}

    \icmlaffiliation{nus}{School of Computing, National University of Singapore, Singapore}

    \icmlcorrespondingauthor{Dixant Mittal}{dixant@comp.nus.edu.sg}


    \vskip 0.3in
]

\begin{abstract}
In decision-making problems with limited training data, policy functions approximated using deep neural networks often exhibit suboptimal performance. An alternative approach involves learning a world model from the limited data and determining actions through online search. However, the performance is adversely affected by compounding errors arising from inaccuracies in the learned world model. While methods like TreeQN have attempted to address these inaccuracies by incorporating algorithmic inductive biases into the neural network architectures, the biases they introduce are often weak and insufficient for complex decision-making tasks. In this work, we introduce \textit{\dtsn} (D-TSN), a novel neural network architecture that significantly strengthens the inductive bias by embedding the algorithmic structure of a best-first online search algorithm. D-TSN employs a learned world model to conduct a fully differentiable online search. The world model is jointly optimized with the search algorithm, enabling the learning of a robust world model and mitigating the effect of prediction inaccuracies. Further, we note that a naive incorporation of best-first search could lead to a discontinuous loss function in the parameter space. We address this issue by adopting a stochastic tree expansion policy, formulating search tree expansion as another decision-making task, and introducing an effective variance reduction technique for the gradient computation. We evaluate D-TSN in an offline-RL setting with a limited training data scenario on Procgen games and grid navigation task, and demonstrate that D-TSN outperforms popular model-free and model-based baselines.
\end{abstract}
\section{Introduction}
Model-free Deep Reinforcement Learning (DRL) has advanced significantly in addressing complex sequential decision-making problems, largely due to advances in Deep Neural Networks (DNN)~\citep{silver2017mastering,berner2019dota,vinyals2019grandmaster}. The networks enable an agent to learn a direct mapping from observations to actions through a policy~\citep{mnih2016asynchronous,schulman2015trust,schulman2017proximal,cobbe2020ppg} or Q-value function~\citep{mnih2013playing,WangSHHLF16,HasseltGS16,hessel2018rainbow,badia2020agent57}. However, DRL has high sample complexity and limited generalization, limiting its wider application in complex real-world settings, especially when a limited training data is available.

Model-based Reinforcement Learning (MBRL) approaches~\citep{kaiser2019model,hafner2019dream,ha2018recurrent} address these issues by learning world models from limited environment interactions and subsequently using these models for online search~\citep{hafner2019dream,hafner2019learning} or policy learning~\citep{kaiser2019model}. Although MBRL is effective for problems where learning the world model is simpler than learning the policy, its efficacy diminishes for complex, long-horizon problems due to accumulated errors arising from inevitable inaccuracies in the learned world model.

Some recent attempts~\citep{tamar2016value,silver2017predictron,lee2018gated,farquhar2018treeqn} have tried to improve the sample efficiency and generalization capabilities of DNN models by incorporating algorithmic inductive biases into the neural network structure. These inductive biases restrict the class of functions a model can learn by leveraging domain-specific knowledge, reducing the likelihood of the model overfitting to the training data and learning a model incompatible with the domain knowledge. The algorithmic inductive biases in these works are either based on the value iteration algorithm, which has difficulties scaling to very large state spaces, or do not incorporate search or optimization within the architecture, hence reducing their effectiveness.

A notable recent work, TreeQN~\citep{farquhar2018treeqn}, combines look-ahead tree search with deep neural networks. It dynamically constructs a computation graph by fully expanding the search tree up to a predefined depth $d$ using a learned world model and computing the Q-values at the root node by recursively applying Bellman equation on the tree nodes. The whole structure is trained end-to-end and enables learning a robust world model that is helpful in the look-ahead tree search. Consequently, TreeQN outperforms conventional neural network architectures on multiple Atari games. However, the size of the full search tree grows exponentially in the depth, which computationally limits the TreeQN to perform only a shallow search. Consequently, TreeQN is unable to fully exploit the domain knowledge of advanced online search algorithms that can handle problems requiring much deeper search.

In this paper, we first show that the loss function applied on Q-function represented by TreeQN is continuous in the network's parameter space, contributing to its success when used with gradient based learning methods. However, extending TreeQN to allow a more sophisticated search algorithm that learns to adapt the search tree during the search process faces multiple hurdles. First, a naive incorporation of search algorithm in the network architecture might lead to a loss function which is discontinuous in the algorithm's parameter space. This is because a small change in parameters could result in construction of a different search tree and bring large change in the output Q-value and the corresponding loss function. Consequently, the loss function could become difficult to optimize using gradient-based optimization techniques as they generally require a continuous function for optimization. To alleviate this issue, we propose employing a stochastic tree expansion policy and optimizing the expected loss, ensuring the continuity of loss function in the algorithm's parameter space. Furthermore, we propose formulating the search tree expansion as another decision-making task with the goal of progressively minimizing the prediction error in the Q-value, and refining the tree expansion policy via REINFORCE~\citep{williams1992simple}. The use of the REINFORCE algorithm introduces another hurdle; REINFORCE has high variance in its gradient estimate. To handle that, we propose the use of a baseline for variance reduction, adopting the telescoping sum trick used in \citet{guez2018learning} for constructing the baseline.

Employing these techniques allows us to overcome the potential discontinuity issues when an adaptive search algorithm is used, resulting in \dtsn (D-TSN), a novel neural network architecture that embeds the structure of a best-first online search algorithm into the network architecture. D-TSN has a modular design and consists several learnable submodules that dynamically combine to construct a computation graph by following a best-first search algorithm. Further, it employs a learned world model to conduct an end-to-end differentiable search, which allows joint optimization of search submodules and the learned world model using gradient-based optimization. Joint optimization enables learning a robust world model that is useful in the online search and optimizes search submodules to account for the inaccuracies in the learned world model.

We compare D-TSN against popular baselines that include model-free Q-network, to assess the impact of incorporating algorithmic inductive biases into the network architecture; TreeQN, to assess the benefits of incorporating a better online search algorithm capable of performing a deeper search within the same computational budget; and model-based search, to evaluate the advantages of joint optimization of the world model with the search algorithm. We evaluate these methods on deterministic decision-making problems, including Procgen games~\citep{cobbe2020leveraging} and a small-scale 2D grid navigation problem, under limited training data framework to examine their sample complexity and generalization capabilities. Empirical evaluations demonstrate that D-TSN outperforms the baselines in both Procgen and navigation tasks.
\section{Related Works}
In recent years, Reinforcement Learning (RL) has undergone remarkable advancements, notably due to the integration of Deep Neural Networks (DNNs) in domains such as Q-learning~\citep{mnih2013playing,WangSHHLF16,HasseltGS16,hessel2018rainbow,espeholt2018impala,badia2020agent57} and policy learning~\citep{williams1992simple,schulman2015trust,schulman2017proximal,mnih2016asynchronous}. Moreover, model-based RL has been significantly enriched by a series of innovative works~\citep{deisenroth2011pilco,nagabandi2017neural,chua2018deep} that learn World Models~\citep{ha2018recurrent} from pixels and subsequently utilize them for planning~\citep{hafner2019dream,hafner2019learning,schrittwieser2020mastering} and policy learning~\citep{kaiser2019model}.

An intriguing dimension of research in RL seeks to merge learning and planning paradigms. AlphaZero~\citep{silver2017mastering} utilizes DNNs as heuristics with Monte Carlo Tree Search (MCTS). On the other hand, use of inductive biases~\citep{hessel2019inductive} has been explored for their impact on policy learning. Recent works like Value Iteration Network (VIN)~\citep{tamar2016value} represents value iteration algorithm in gridworld domains using a convolutional neural network. Similarly, Gated Path Planning Network~\citep{lee2018gated} replaces convolution blocks in VIN with LSTM blocks to address vanishing or exploding gradient issue. Further, Neural A* Search~\citep{YonetaniTBNK21} embeds A* algorithm into network architecture and learns a cost function from a gridworld map. Alternatively, Neural Admissible Relaxation (NEAR)~\citep{ShahZSVYC20} learns an approximately admissible heuristics for A* algorithm. Works like Predictron~\citep{silver2017predictron} and Value Prediction Network~\citep{oh2017value} employs a learned world model to simulate rollouts and accumulate internal rewards in an end-to-end framework. Extending this paradigm, Imagination-based Planner (IBP)~\citep{pascanu2017learning} utilizes an unstructured memory representation to collate information from the internal rollouts. Alternatively, MCTSnets~\citep{guez2018learning} embeds the structure of Monte Carlo Tree Search (MCTS) in the network architecture and steers the search using parameterised memory embeddings stored in a tree structure. However, \citet{guez2019investigation} suggests that recurrent neural networks could display certain planning properties without requiring a specific algorithmic structure in network architecture.

Our work draws inspiration from these recent contributions, notably TreeQN~\citep{farquhar2018treeqn}. TreeQN employs a full tree expansion up to a fixed depth, followed by recursive updates to the value estimate of tree nodes in order to predict the Q-value at the input state. However, TreeQN's full tree expansion is exponential in the depth, rendering it computationally expensive for tackling complex planning problems that necessitate deeper search. Our work addresses this issue by incorporating a more advanced online search algorithm that emphasizes expansion in promising areas of the search tree.
\section{\dtsn}\label{section_method}
\dtsn (D-TSN) is a neural network design that incorporates the algorithmic inductive bias of a best-first search algorithm into the network structure and employs a learned world model to perform the online search. The learned world model is jointly optimized with the search algorithm, so that the learned world model, although imperfect, is useful for the online search, and search submodules are robust against errors in the world model.

\subsection{Learnable Submodules}\label{section_method_learnable_submodules}
D-TSN comprises several learnable submodules that function as subroutines in a best-first search algorithm and dynamically construct a computation graph. An illustration is provided in \cref{figure_illustrations_submodules}.

The \textit{Encoder} module ($\mathcal{E}_\theta$) transforms the actual state $s_t$ into a latent state $h_t = \mathcal{E}_\theta(s_t)$, facilitating online search within a latent space. The \textit{Transition} module ($\mathcal{T}_\theta$) approximates the environment's transition function, using $h_t$ and action $a_t$ to predict the subsequent latent state, $h_{t+1} = \mathcal{T}_\theta (h_t,a_t)$, of the transition. The \textit{Reward} module ($\mathcal{R}_\theta$) approximates the environment's reward function, predicting the reward, $r_t = \mathcal{R}_\theta(h_t,a_t)$, for the transition based on $h_t$ and $a_t$. The \textit{Value} module ($\mathcal{V}_\theta$) maps latent state $h_t$ to its estimated state value, $\mathcal{V}_\theta(h_t)$.

Dividing the network into these submodules reduces total learnable parameters and injects significant planning bias into the network architecture, preventing overfitting to an arbitrary function that may align with the limited available training data.

\subsection{Tree Search in Latent Space} \label{section_method_tree_search}
The search begins by encoding the input state $s_0$ into its latent state $h_0$. It proceeds through expansion and backup phases. In the expansion phase, the search tree expands iteratively for a predetermined number of expansion steps. During the backup phase, Q-values at the root node are recursively computed using the Bellman equation across expanded tree nodes. Each tree node represents a latent state reachable from the root, while branches represent actions taken at the nodes. A set of candidate nodes, $O$, is maintained during the expansion phase, representing the tree nodes eligible for further expansion. The D-TSN algorithm is detailed in \cref{algorithm_dtsn} in the appendix.

\subsubsection{Expansion Phase}\label{section_method_tree_search_expansion}
Each search iteration begins by evaluating the path value, $\Bar{V}(N)$, of the candidate nodes. The path value is defined as the cumulative sum of rewards from the root node to a particular leaf node $N$, in addition to the value of the leaf node predicted by the value module ($\mathcal{V}_\theta$), i.e. 
\begin{equation} \label{equation_path_value_of_node}
    \Bar{V}(N) = \mathcal{R}_\theta(h_0,a_0) + ... + \mathcal{V}_\theta(h_N) 
\end{equation}
A naive implementation of the search selects the node $\hat{N}$ with the highest total path value for expansion; however, for differentiable search, the node $\hat{N}$ is sampled from the candidates using a distribution constructed by applying softmax over the path values of the candidates. Expansion of node $\hat{N}$ is carried out by simulating every action on the node using the transition module ($\mathcal{T}_\theta$). Simultaneously, the associated reward, $\mathcal{R}_\theta(h_{\hat{N}},a)$, is also computed. The resulting latent states are added to the tree as children of node $\hat{N}$. Additionally, they are added to the candidate set $O$ for subsequent expansions, while $\hat{N}$ is excluded from the set. This can be represented as:
\begin{equation*}
    O \leftarrow O \cup \{h_{a} |\ h_{a} = \mathcal{T}_\theta(h_{\hat{N}},a); \forall a \in A\} - \hat{N}
\end{equation*}

\subsubsection{Backup Phase} \label{section_method_tree_search_backup}
The expansion phase is followed by the \textit{Backup phase}. In this phase, value of all tree nodes is recursively updated using the Bellman equation as follows:
\begin{align*}
    Q(N,a) &=  \mathcal{R}_\theta(h_N,a) + V( N')
\end{align*}
\begin{align*}
    \text{where, \quad } N' &= \mathcal{T}_\theta(h_N,a) \\
    V(N') &= \begin{cases}
                \mathcal{V}_\theta(h_{N'}),              & \text{N' is a leaf node} \\
                \max_a Q(N',a),         & \text{otherwise}
            \end{cases} 
\end{align*}
After the backup phase, the Q-values at the root node are returned as the final output of the online search.

\subsection{Construction of Computation Graph}
Throughout the expansion and backup phases, illustrated in \cref{figure_illustrations_expansion,,figure_illustrations_backup} respectively, a dynamic computation graph is constructed where the output Q-values depend on the combination of all the submodules, i.e. Encoder, Transition, Reward, and Value modules. During training, the output Q-function is evaluated by a loss function and is optimized using gradient-based optimizers such as Stochastic Gradient Descent (SGD). These optimizers backpropagate the gradient of this loss through the entire computation graph, and update the parameters of submodules in an end-to-end process. An illustration of the process is provided in \cref{figure_illustrations_graph}.

\subsection{Discontinuity of the Loss Function}
A pivotal aspect of optimizing D-TSN's parameters through gradient descent is ensuring that the loss function applied to the output Q-function is \textit{continuous in the network's parameter space}. We initiate this discussion by showing that the loss function applied to the Q-function represented by TreeQN is continuous in the parameter space. 

\begin{theorem}\label{theorem_continuity_of_q_function}
Given a set of parameterised modules that are continuous in the parameter space $\theta$, the Q-function computed by fully expanding a search tree to a fixed depth `$d$' by composing these modules and backpropagating the children values using addition and max operations is continuous in the parameter space $\theta$. (See \cref{appendix-section-continuity} for the proof.)
\end{theorem}

\cref{theorem_continuity_of_q_function} establishes that the Q-function calculated by a full tree expansion, as performed in TreeQN, is continuous in the network's parameter space. Consequently, the loss function applied to this Q-function is also continuous, contributing to TreeQN's success in gradient-based optimization. 

On the other hand, a naive implementation of D-TSN, as outlined in \cref{section_method_tree_search_expansion}, approximates full tree expansion by expanding only those paths likely to represent the optimal trajectory from the root node. This results in the output Q-function and the corresponding loss function being dependent on the specific tree, $\tau$, constructed during the search:
\begin{equation}\label{equation_naive_loss}
    L(s,a) = \mathcal{L}(Q_\theta(s,a|\tau))
\end{equation}
When the network parameters are changed slightly, the naive implementation could generate a different tree structure, which, in turn, would induce large change in the loss function and affect its continuity in the parameter space.

\subsection{Enabling Continuity in the Loss Function}
To overcome the discontinuity issue observed in the naive implementation of D-TSN, we employ a stochastic tree expansion policy. This approach allows us to optimize the expectation of the loss function, defined as:
\begin{align}\label{equation_expected_loss}
    L(s,a) 
    &= \mathbb{E}_{\tau} \Big[\mathcal{L}(Q_\theta(s,a|\tau))\Big] \\
    &= \sum_\tau \pi_\theta(\tau) \mathcal{L}(Q_\theta(s,a|\tau)) \nonumber
\end{align}
The expected loss in \cref{equation_expected_loss} is continuous in the parameter space $\theta$ and can be optimized using well-known gradient-based optimization techniques.

\subsection{Stochastic Tree Expansion Policy}
In order to compute the expected loss in \cref{equation_expected_loss}, let us represent a partial search tree after $t$ iterations as $\tau_t$. The output Q-values, denoted as $Q_\theta(s,a | \tau)$, depends on the final tree $\tau$ sampled after $T$ iterations. We can define a stochastic tree expansion policy $\pi_\theta(\tau_t)$ that takes a tree $\tau_t$ as input and outputs a distribution over the candidate nodes, facilitating stochastic selection of the node for further expansion and generating the tree $\tau_{t+1}$. We compute the stochastic tree expansion policy by taking softmax over the path value, as defined in \cref{equation_path_value_of_node}, of each candidate node as follows:
\begin{equation}
    \pi_\theta(n|\tau_t) = \dfrac{exp(\Bar{V}(n))}{\sum_{n' \in O (\tau_t)} \ exp(\Bar{V}(n'))}
\end{equation}
 The gradient of the expected loss~\citep{SchulmanHWA15} in \cref{equation_expected_loss} can be computed as follows (See \cref{appendix_section_derivation_of_gradient} for the derivation):
\begin{dmath} \label{equation_tree_policy_gradient}
    \nabla_\theta L(s,a) = \mathbb{E}_{\tau} \Big[ \mathcal{L}(Q_\theta(s,a|\tau)) \sum_{t=1}^T \nabla_\theta \log \pi_\theta(n_t|\tau_t) 
    + \nabla_\theta \mathcal{L}(Q_\theta(s,a|\tau)) \Big]
\end{dmath}

\subsection{Reducing Variance using Telescopic Sum} \label{section_method_telescoping_sum}
The REINFORCE term of the gradient in \cref{equation_tree_policy_gradient}
usually has high variance due to the difficulty of credit assignment~\citep{guez2018learning} in a reinforcement learning type objective; the second part of the gradient equation is the usual optimization of a loss function, so we expect it to be reasonably well behaved. To reduce the variance of the first part of the gradient, we take inspiration from the telescoping sum trick in~\citet{guez2018learning}.

Let us denote the loss after $t$ iterations as $L_t= \mathcal{L}(Q_\theta(s,a|\tau_{t}))$. The objective is to minimize (or equivalently, maximize the negative of) the loss after $T$ iterations, represented as $L_T$. Assuming that $L_0=0$, we can rewrite $L_T$ as a telescoping sum:
\begin{equation*}
    L_T = L_T - L_0 = \sum_{t=1}^T L_t - L_{t-1}
\end{equation*}
Now, we define a reward term, $r_t$, for selecting node $n$ during the $t^{th}$ iteration as the reduction in the loss value after the $t^{th}$ iteration, i.e. $r_t = L_t - L_{t-1}$. Further, let us represent the return (or the sum of rewards) from iteration $t$ to the final iteration $T$ as $R_t$, which can be computed as:
\begin{equation*}
    R_t = \sum_{i=t}^T r_i = L_T - L_{t-1}
\end{equation*}
Given this, the REINFORCE term from \cref{equation_tree_policy_gradient} can be reformulated as $\sum_t^T \nabla_\theta \log \pi_\theta(n_t|\tau_t) R_t$, where $L_{t-1}$ serves as a baseline that helps in reducing variance. Consequently, the final gradient estimate of the loss in \cref{equation_expected_loss} is expressed as:
\begin{dmath}\label{equation_tree_policy_gradient_telescopic}
    \nabla_\theta L(s,a) = \mathbb{E}_{\tau} \Big[\sum_t^T \nabla_\theta \log \pi_\theta(n_t|\tau_t) R_t
    + \nabla_\theta \mathcal{L}(Q_\theta(s,a|\tau)) \Big] 
\end{dmath} 
For empirical evaluations, we use a single sample estimate~\cite{SchulmanHWA15} of the expected gradient in \cref{equation_tree_policy_gradient_telescopic}.
\section{Experiments}
\subsection{Test Domains}
\subsubsection{Navigation:} 
This is a 2D grid-based navigation task designed to quantitatively and qualitatively visualize the agent's generalization capabilities. The environment is a $20\times20$ grid featuring a central hall. At the start of each episode, a robot is randomly positioned inside this hall, while its destination is set outside. We present two distinct scenarios: one with a single exit and another with two exits from the central hall. Training is done only in the two exit scenario. The single exit scenario, which requires a longer-horizon planning to reach the goal, is used to test generalization.

\subsubsection{Procgen:} 
Procgen is a collection of 16 procedurally generated, game-like environments, specifically designed to evaluate an agent's generalization capability, differentiating it from Atari 2600 games~\citep{mnih2013playing}. The open-source code for these environments can be found at \url{https://github.com/openai/procgen}. Further details on these domains are presented in \cref{appendix_experiments_domains}.

\subsection{Learning Framework}
D-TSN can serve as a drop-in replacement for conventional Convolutional Neural Network (CNN) architectures. It can be trained using both online and offline reinforcement learning algorithms. In this paper, we employ the offline reinforcement learning (Offline-RL) framework to focus on the sample complexity and generalization capabilities of D-TSN when compared with the baselines. Offline-RL, often referred to as batch-RL, is the scenario wherein an agent learns its policy solely from a fixed dataset of experiences, without further interactions with the environment. Please refer to \cref{appendix_experiments_learning_setup} for a detailed explanation of the learning framework.

\subsubsection{Training Datasets}
We use a behavior policy, which can be \textit{optimal or sub-optimal}, to collect the offline training dataset. An optimal policy generates a dataset with lower noise and a cleaner training signal, leading to a stable learning process. In contrast, a sub-optimal policy produces a noisier dataset, which consequently restricts the quality of the policy that the agent can learn. The choice of the behavior policy depends on domain-specific requirements and the resources available for data collection. Our evaluations explore datasets generated using both optimal (in the Navigation domain) and sub-optimal (in the Procgen domain) behavior policies.

The training dataset, $\mathcal{D}$, consists of trajectories generated using the behavior policy $\pi_B$, where each trajectory, $\tau_i$, is defined as a series of $T$ tuples, each comprising the state observed, action taken, reward observed, and the corresponding Q-value of the observed state, denoted as $\tau_i = \Big\{(s_{t,i},\ a_{t,i},\ r_{t,i},\ Q_{t,i})\Big\}_{t=0}^{T}$. The Q-value for state $s_{t,i}$ can be computed by adding the rewards obtained in the trajectory from timestep $t$ onwards, i.e. $Q_{t,i} = \sum_{j=t}^T {r_{j,i}}$. We limit the number of trajectories to $1000$ for each domain to evaluate the sample complexity and generalization capabilities of each method.

\subsubsection{Loss Function}
The primary objective is to make the Q-values, as computed by a method, closely approximate the observed Q-values for corresponding states and actions. To achieve this, we minimize the mean squared error between the predicted and observed Q-values. This loss, denoted as $\mathcal{L}_{Q}$, is defined as:
\begin{equation} \label{equation_mse_loss}
\mathcal{L}_{Q} = \mathbb{E}_{(s,a,Q) \sim \mathcal{D}} (
Q_\theta(s,\ a) - Q )^2
\end{equation}
However, in the offline-RL setting, there is a risk that Q-values for out-of-distribution actions could be overestimated. To address this, we incorporate the CQL~\citep{kumar2020conservative} loss, which encourages the agent to adhere to actions observed within the training data distribution. This loss, $\mathcal{L}_{\mathcal{D}}$, is defined as:
\begin{equation} \label{equation_cql_loss}
\mathcal{L}_{\mathcal{D}} = \mathbb{E}_{(s,a) \sim \mathcal{D}} \Big( \log \sum_{a'} \exp ( Q_\theta(s,\ a') ) - Q_\theta(s,\ a) \Big)
\end{equation}
Additionally, as the search in D-TSN is performed in the latent space, we incorporate self-supervised consistency loss functions~\citep{schwarzer2020data,ye2021mastering} to ensure consistency in the transition and reward networks. Consider actual states $s_{t}$ and $s_{t+1}$, where $s_{t+1}$ is obtained by taking action $a_{t}$ in state $s_{t}$. Their corresponding latent state representations are denoted as $h_{t}$ and $h_{t+1}$. Here, $h_{t} = \mathcal{E}_\theta(s_{t})$ and $h_{t+1} = \mathcal{E}_\theta(s_{t+1})$. Now, we can use the transition module to predict another latent representation of state $s_{t+1}$, represented as $\Bar{h}_{t+1} = \mathcal{T}_\theta(h_{t},\ a_{t})$. To ensure that the transition function, $\mathcal{T}_\theta$, provides consistent predictions for the transitions in the latent space, we minimize the squared error between the latent representations $h_{t+1}$ and $\Bar{h}_{t+1}$.
\begin{equation} \label{equation_transition_loss}
\mathcal{L}_{\mathcal{T}_\theta} =\mathbb{E}_{(s_t,\ a_t,\ s_{t+1}) \sim \mathcal{D}} ( \Bar{h}_{t+1} - h_{t+1} )^2
\end{equation}
In a similar vein, we minimize the mean squared error between the predicted reward $\mathcal{R}_\theta(h_{t}, a_{t})$ and the actual reward observed $r_{t}$ in the training dataset $\mathcal{D}$.
\begin{equation} \label{equation_reward_loss}
\mathcal{L}_{\mathcal{R}_\theta} = \mathbb{E}_{(s_{t},\ a_{t},\ r_{t}) \sim \mathcal{D}} ( \mathcal{R}_\theta(h_{t}, a_{t}) - r_{t} )^2
\end{equation}
Based on each method's specifications, we combine the loss functions from \cref{equation_mse_loss,,equation_cql_loss,,equation_transition_loss,,equation_reward_loss} during training, as detailed in \cref{experiments_baselines}. Additional details on these loss functions are presented in \cref{appendix_experiments_loss_functions}.

\begin{table}
    \caption{Comparison of D-TSN with the baselines on Navigation using metrics \textit{Success Rate} and \textit{Collision Rate}.}
    \label{table_results_on_navigation}
    \centering
    \resizebox{0.48\textwidth}{!}{
        \begin{tabular}{l|cc}
            \toprule
            \multicolumn{1}{c|}{\multirow{2}{*}{Solver}} & Success                                   & Absolute       \\
                                                         & Rate                                      & Collision Rate \\
            \midrule
            \multicolumn{1}{c}{}                         & \multicolumn{2}{c}{\small{\textit{Navigation (2 exits)}}}                   \\
            \midrule
            Model-free Q-network                         & 94.5\% \scriptsize ($\pm$ 0.2\%)          & 4.4\%          \\
            Model-based Search                           & 93.2\% \scriptsize ($\pm$ 0.3\%)          & 6.7\%          \\
            TreeQN                                       & 95.4\% \scriptsize ($\pm$ 0.2\%)          & 3.8\%          \\
            D-TSN                                        & \textbf{99.0\%} \scriptsize ($\pm$ 0.1\%) & \textbf{0.7\%} \\
            \midrule
            \multicolumn{1}{c}{}                         & \multicolumn{2}{c}{\small{\textit{Navigation (1 exit)}}}                    \\
            \midrule
            Model-free Q-network                         & 47.1\% \scriptsize ($\pm$ 0.5\%)          & 50.2\%         \\
            Model-based Search                           & 86.9\% \scriptsize ($\pm$ 0.3\%)          & 12.4\%         \\
            TreeQN                                       & 51.8\% \scriptsize ($\pm$ 0.5\%)          & 39.2\%         \\
            D-TSN                                        & \textbf{99.3}\% \scriptsize ($\pm$ 0.1\%) & \textbf{0.2}\% \\
            \bottomrule
        \end{tabular}
    }
\end{table}

\subsection{Baselines and Implementation Details} \label{experiments_baselines}
We benchmark D-TSN against the following prominent baselines:
    
\textbf{Model-free Q-network:}
This allows us to assess the significance of incorporating inductive biases into the neural network architecture. This model is trained using the loss defined as: 
\begin{equation*}
    \mathcal{L}_{\text{Q-net}} = \lambda_1 \mathcal{L}_{Q} + \lambda_2 \mathcal{L}_{\mathcal{D}}
\end{equation*}    
\textbf{Model-based Search:}
In this baseline, we utilize the submodules defined for D-TSN, but the world models and the value module are trained independently of each other. During evaluation, we employ the best-first search, akin to D-TSN, utilizing the independently trained modules. Through this baseline, we assess the benefits derived from the joint optimization of the world model and the search algorithm. For our evaluations, we perform 10 search iterations for each input state. To train this model, we compute the Q-value, $Q_\theta$, \textit{without performing the search} and optimize the loss defined as: 
\begin{equation*}
    \mathcal{L}_{\text{Search}} = \lambda_1 \mathcal{L}_{Q} + \lambda_2 \mathcal{L}_{\mathcal{D}} + \lambda_3 \mathcal{L}_{\mathcal{T}_\theta} + \lambda_4 \mathcal{L}_{\mathcal{R}_\theta}
\end{equation*}
    
\textbf{TreeQN:}
This comparison helps in highlighting the advantages of an advanced search algorithm, used in D-TSN, that can execute a deeper search while maintaining similar computational constraints. For evaluations, we adhere to a depth of 2, as described in~\citet{farquhar2018treeqn}, for both Procgen and navigation domains. Notably, greater depths, such as 3 or more, are infeasible since the resulting computation graph exceeds the memory capacity (roughly 11GB) of a typical consumer-grade GPU. This model is trained using the loss (as discussed in~\citet{farquhar2018treeqn}) defined as:
\begin{equation*}
    \mathcal{L}_{\text{TreeQN}} = \lambda_1 \mathcal{L}_{Q} + \lambda_2 \mathcal{L}_{\mathcal{D}} + \lambda_3 \mathcal{L}_{\mathcal{R}_\theta}
\end{equation*}
\textbf{D-TSN:}
For training and evaluating D-TSN, we perform 10 search iterations for each input state. We train it using the loss defined as:
\begin{equation*}
    \mathcal{L}_{\text{D-TSN}} = \mathbb{E}_{\tau} \Big[ \lambda_1 \mathcal{L}_{Q} + \lambda_2 \mathcal{L}_{\mathcal{D}} \Big] + \lambda_3 \mathcal{L}_{\mathcal{T}_\theta} + \lambda_4 \mathcal{L}_{\mathcal{R}_\theta}
\end{equation*}
Every method is trained with same datasets using their respective loss functions. We fine-tune the hyperparameters, $\lambda_1,\lambda_2,\lambda_3 \text{ and } \lambda_4$, using grid search on a log scale. A more comprehensive discussion of the baselines can be found in \cref{appendix_experiments_implementation_details}.

\begin{table}[t]
    \caption{Comparison of D-TSN with the baselines on Procgen using metrics \textit{Mean Scores} and \textit{Mean Z-score}.} \label{table_scores_on_procgen}
    \centering
    \resizebox{0.48\textwidth}{!}{
        \begin{tabular}{l|c|c|c|c}
            \toprule
            \multicolumn{1}{c|}{\multirow{2}{*}{Games}} & Model-free     & Model-based   & \multirow{2}{*}{TreeQN} & \multirow{2}{*}{\textbf{D-TSN}} \\
                                                        & Q-network      & Search        &                                                           \\
            \midrule
            bigfish                                     & \textbf{21.49} & 14.51         & 19.06                   & 20.80                           \\
            bossfight                                   & \textbf{8.77}  & 6.16          & 8.31                    & 8.53                            \\
            caveflyer                                   & 2.01           & 3.35          & \textbf{3.57}           & 3.53                            \\
            chaser                                      & 5.79           & 4.43          & 6.46                    & \textbf{6.66}                   \\
            climber                                     & 2.81           & \textbf{5.34} & 4.34                    & 5.20                            \\
            coinrun                                     & 5.02           & 6.50          & 5.07                    & \textbf{6.56}                   \\
            dodgeball                                   & 1.10           & 4.66          & 4.79                    & \textbf{5.52}                   \\
            fruitbot                                    & 14.00          & 10.50         & \textbf{15.59}          & 13.85                           \\
            heist                                       & 0.81           & 1.97          & 1.99                    & \textbf{2.12}                   \\
            jumper                                      & 3.39           & 4.06          & 3.42                    & \textbf{4.31}                   \\
            leaper                                      & 7.57           & 6.35          & 7.87                    & \textbf{8.05}                   \\
            maze                                        & 2.00           & \textbf{2.63} & 2.18                    & 2.51                            \\
            miner                                       & 1.42           & 1.55          & 1.55                    & \textbf{1.63}                   \\
            ninja                                       & 5.07           & 5.15          & 5.16                    & \textbf{5.88}                   \\
            plunder                                     & 12.77          & 9.67          & 12.50                   & \textbf{13.26}                  \\
            starpilot                                   & 15.53          & 13.57         & \textbf{16.98}          & 16.91                           \\
            \midrule
            \textit{Mean Z-score}                       & \textbf{-}     & 0.11          & 0.22                    & \textbf{0.31}                   \\
            \bottomrule
        \end{tabular}
    }
\end{table}

\subsection{Results}
\subsubsection{Navigation} 
We compare D-TSN against the baselines using evaluation metrics like success rate and collision rate, where success rate refers to the fraction of test levels completed by the agent, and collision rate refers to the fraction of levels failed due to collision with a wall. We report the evaluation results in the \cref{table_results_on_navigation}.

We observe that D-TSN outperforms the baselines on both navigation scenarios. Notably, when agents are trained on data from the 2-exits scenarios but are tested in the 1-exit scenario, D-TSN, with its powerful inductive bias, retains its performance. In stark contrast, the Model-free Q-network and TreeQN experience a substantial performance decline, which underscores their limited generalization ability. Model-based Search also registers a minor decrease in the success rate, reinforcing the importance of jointly optimizing the world model for enhanced robustness.

\subsubsection{Procgen}
We evaluate the performance of D-TSN and the baselines on all the games in the Procgen suite, comparing their performance in terms of mean Z-score and head-to-head wins. We also report the mean scores across 1000 episodes obtained by these methods in \cref{table_scores_on_procgen}. As Procgen games have different scales, we use model-free Q-network as baseline to compute a normalized score\footnote{We use Z-score because the baseline normalized score used in prior works~\cite{badia2020agent57,kaiser2019model,mittal2023expose} is problematic in our experiments, as described in \cref{appendix_experiments_bns_issue}.}, $\text{Z-score} = (\mu_\pi - \mu_B)/\sigma_B$, where $\mu_\pi \text{ and } \mu_B$ represent the mean scores obtained by the agent policy and the baseline policy respectively and $\sigma_B$ represents the standard deviation of the scores obtained by the baseline policy. 

\begin{table}
    \caption{Head-to-head comparison of D-TSN with the baselines on Procgen using metric \textit{number of games won}.}
    \label{table_results_on_procgen_1v1}
    \centering
    \begin{tabular}{l|c}
        \toprule
        \multicolumn{1}{c|}{\multirow{2}{*}{Baseline}} & Tree Search Network       \\
                                                       & (Games won / Total games) \\
        \midrule
        Model-free Q-network                           & 13 / 16                   \\
        Model-based Search                             & 14 / 16                   \\
        TreeQN                                         & 13 / 16                   \\
        \bottomrule
    \end{tabular}
\end{table}

It is important to note that the training data for Procgen was generated using a sub-optimal behavior policy, leading to noisy training signals in comparison to the relatively noise-free data in Navigation domain. Despite this, we observe that D-TSN reports a higher mean Z-score, averaged across the 16 Procgen games. This difference was particularly noticeable in games such as climber, coinrun, jumper, and ninja, which necessitate the planning of long-term action consequences, highlighting the role of the stronger inductive bias employed by D-TSN. In Procgen, Model-based Search lags behind both TreeQN and D-TSN, highlighting that for complex environments, joint optimization enables learning a robust world model. In head-to-head comparison listed in \cref{table_results_on_procgen_1v1}, D-TSN won in 13 games against the Model-free Q-network, 14 games against Model-based Search, and 13 games against TreeQN.

\begin{table}
    \caption{Comparison of D-TSN with its variants to evaluate the role of \textit{Auxiliary losses}, \textit{REINFORCE} term and \textit{Telescoping Sum}.}
    \label{table_ablations}
    \centering
    \resizebox{0.48\textwidth}{!}{
        \begin{tabular}{l|c|c}
            \toprule
            \multicolumn{1}{c|}{\multirow{3}{*}{Solver}} & \textit{Navigation}                       & \multirow{2}{*}{\textit{Procgen}} \\
                                                         & (1-exit)                                  &                  \\
            \cmidrule{2-3}
                                                         & Success Rate                              & Mean Z-score     \\
            \midrule
            \textbf{D-TSN}                               & \textbf{99.3\%} \scriptsize ($\pm$ 0.1\%) & \textbf{0.31}    \\
            \quad \textit{w/o Telescoping Sum}           & 98.5\% \scriptsize ($\pm$ 0.1\%)          & 0.28             \\
            \quad \textit{w/o REINFORCE term}            & 97.7\% \scriptsize ($\pm$ 0.2\%)          & 0.29             \\
            \quad \textit{w/o Auxiliary losses}          & 91.1\% \scriptsize ($\pm$ 0.3\%)          & -                \\
            \bottomrule
        \end{tabular}
    }
\end{table}

\subsection{Ablation Studies}
\subsubsection{The Impact of the REINFORCE Term and the Telescoping Sum Trick}
In this study, we assess the impact of both the REINFORCE term and the Telescoping Sum trick on D-TSN's performance. The results, presented in \cref{table_ablations}, show notable differences. Without the telescoping sum, D-TSN see a modest decrease in the success rate for Navigation (1-exit), moving from 99.3\% to 98.5\%. Similarly, the mean Z-score for Procgen dips to 0.28. The omission of the REINFORCE term also marks a decline, with the Navigation (1-exit) success rate landing at 97.7\% and the mean Z-score for Procgen dipping to 0.29. 

\subsubsection{The Contribution of Auxiliary Losses}
In this study, we explore the contribution of auxiliary losses to D-TSN's performance. As outlined in \cref{table_ablations}, the absence of these auxiliary losses lead to a more pronounced decline in the performance. Specifically, the success rate in the Navigation (1-exit) task drops significantly to 91.1\% compared to the 99.3\% with auxiliary losses.

\subsubsection{Enhancing Performance through Deeper Search}
In this study, we focus on the advantages of executing a deeper search in the D-TSN framework. To increase the search depth, we increased the number of search iterations during both the training and evaluation phases. Notably, we maintain an identical number of iterations in both phases to prevent any distribution shifts in the world models. The results, as listed in \cref{table_ablation_scaling_with_depth}, indicate that a higher number of iterations leads to an improvement in the success rate in both Navigation(2-exits) and Navigation(1-exit) scenarios. This improvement underscores the capability of D-TSN to exploit its search inductive bias effectively, suggesting that the system's performance can be scaled up further given additional computational resources.

\begin{table}[t]
    \caption{Comparison of D-TSN trained with different number of \textit{search iterations} (`n_itr') to evaluate the performance gain by performing deeper searches.}
    \label{table_ablation_scaling_with_depth}
    \centering
        \begin{tabular}{l|c|c}
            \toprule
            \multicolumn{1}{c|}{\multirow{2}{*}{Solver}} & \textit{Navigation } & \textit{Navigation } \\
                                                         & \textit{(2-exits)}   & \textit{(1-exit)}    \\
            \midrule
            D-TSN (n_itr=5)                             & 97.40\%              & 96.60\%              \\
            D-TSN (n_itr=10)                            & 99.00\%              & 99.30\%              \\
            D-TSN (n_itr=20)                            & 99.50\%              & 99.10\%              \\
            \bottomrule
        \end{tabular}
\end{table}
\begin{table}[t]
    \caption{Comparison of D-TSN and Model-based Search to highlight the robustness of the world model when performing deeper searches (`n_itr' refers to number of search iterations).}
    \label{table_ablation_robustness_of_world_model}
    \centering
    \resizebox{0.48\textwidth}{!}{
        \begin{tabular}{l|c|c}
            \toprule
            \multicolumn{1}{c|}{\multirow{2}{*}{Solver}} & \textit{Navigation } & \textit{Navigation } \\
                                                         & \textit{(2-exits)}   & \textit{(1-exit)}    \\
            \midrule
            D-TSN (\small{n_itr=10})                    & 99.00\%              & 99.30\%              \\
            D-TSN (\small{n_itr=20})                    & 99.30\%              & 99.40\%              \\
            D-TSN (\small{n_itr=50})                    & 99.70\%              & 99.60\%              \\
            \midrule
            Model-based Search (\small{n_itr=10})       & 93.20\%              & 86.90\%              \\
            Model-based Search (\small{n_itr=20})       & 91.10\%              & 84.60\%              \\
            Model-based Search (\small{n_itr=50})       & 89.50\%              & 80.40\%              \\
            \bottomrule
        \end{tabular}
    }
\end{table}

\subsubsection{Robustness of the World Model}
A key challenge in employing learned world models in online searches is addressing the compounding errors, which impacts the accuracy and effectiveness of online searches. However, this study shows that the joint optimization of both the world model and the search algorithm compensates for these inaccuracies, ensuring the world model is usable in deeper online searches. In this study, we train D-TSN with 10 iterations and subsequently evaluate it with an increased number of iterations. This approach allowed us to conduct deeper searches and verify the robustness of the world model. We compare it with Model-based Search, where the world model is trained independently of the search submodules. The findings, detailed in \cref{table_ablation_robustness_of_world_model}, demonstrate that the world model trained jointly with the search algorithm consistently outperforms the independently trained model, especially as the number of iterations increased.

\begin{table}[t]
    \caption{Comparison of average \textit{runtime} on Procgen for different methods to evaluate their computational requirements (`n_itr' refers to number of search iterations).}
    \label{table_ablation_runtime_on_procgen}
    \centering
    \resizebox{0.48\textwidth}{!}{
        \begin{tabular}{l|r|r}
            \toprule
            \multicolumn{1}{c|}{\multirow{2}{*}{Solver}} & \multicolumn{1}{c|}{Time Taken}    & \multicolumn{1}{c}{Total}         \\
                                        & \multicolumn{1}{c|}{Per Iteration} & \multicolumn{1}{c}{Training Time} \\
            \midrule
            Model-free Q-network        & 43ms                               & 4h 0m                             \\
            TreeQN (depth=2)            & 357ms                              & 23h 45m                           \\
            TreeQN (depth=3)            & Out-of-memory                      & Out-of-memory                     \\
            D-TSN (n_itr=5)                 & 153ms                              & 9h 30m                            \\
            D-TSN (n_itr=10)                & 294ms                              & 17h 30m                           \\
            \bottomrule
        \end{tabular}
    }
\end{table}

\subsubsection{Real-World Computation Speed Analysis}
To understand the computational demands of D-TSN and the baselines in real-world scenarios, we analyze the average time taken per training step (both forward and backward passes) and the total time taken for Model-free Q-network, TreeQN, and D-TSN in the Procgen environment as shown in \cref{table_ablation_runtime_on_procgen}. All models were trained on a 2080Ti GPU using PyTorch, with a batch size of 256. We observe that D-TSN provides the ability to conduct a deeper search while outpacing TreeQN in speed.
\section{Conclusion}
In this paper, we introduce \dtsn (D-TSN), a novel neural network architecture that conducts a fully differentiable online search in a latent state space. It features four learnable submodules that encode the input state into its latent state and conducts a best-first style search in the latent space using learned reward, transition, and value functions. Critically, the transition and reward modules are jointly optimized with the search algorithm, resulting in a robust world model that is useful for the online search and a search algorithm that can account for the errors in the world model. Furthermore, we note that a naive incorporation of best-first search could lead to discontinuity of the loss in the parameter space and address this issue by employing a stochastic tree expansion policy. We optimize the expected loss function using a REINFORCE-style objective function and propose a telescoping sum trick to reduce the variance of the gradient for this expected loss. We evaluate D-TSN against well-known baselines on Procgen games and a navigation task in an offline reinforcement learning setting to assess their sample efficiency and generalization capabilities. Our results show that D-TSN outperforms the baselines in both domains.

Nonetheless, the strength of the current implementation of D-TSN is currently limited to deterministic decision-making problems with a discrete action space. To cater to a broader spectrum of decision-making problems, there is a need to revamp the transition model to manage stochastic world scenarios, which we aim to address in our future works.

\section{Impact Statement}
This paper presents work whose goal is to advance the field of Machine Learning. Societal consequences depend on the specific application which uses the method developed here and will need to be considered on a case by case basis.

\bibliography{main}
\bibliographystyle{icml2024}

\newpage
\appendix
\onecolumn

\section{\dtsn Algorithm} \label{appendix_algorithm}
\subsection{\dtsn Pseudo-code}
\begin{algorithm}[H]
\linespread{1.2}\selectfont
\caption{Differentiable Tree Search (D-TSN)}
\label{algorithm_dtsn}
\SetKwComment{Comment}{//}{}
\KwIn{Input state, $s_{root}$}
\KwResult{Q-values, $Q(s_{root},a)$}
\BlankLine
$h_{root} \leftarrow \mathcal{E}_\theta (s_{root})$ \Comment*[r]{Encode $s_0$ to its latent state $h_0$}
$node_{root} \leftarrow initialise(h_{root})$ \Comment*[r]{Initialize root node}
$Open \leftarrow \{node_{root}\}$ \Comment*[r]{Initialize candidate set $Open$}
\tcp{\textbf{Expansion phase}}
$itr \leftarrow 0$\;
\Repeat{$itr$ < MAX\_ITR}{
    \ForEach{$node\in Open$}{
        $h_{node} \leftarrow getLatent(node)$ \Comment*[r]{Get latent state of $node$}
        $\Bar{V}(node) \leftarrow sumOfRewards(node) + \mathcal{V}_\theta(h_{node})$ \Comment*[r]{Compute path values for open nodes}
    }
    $\pi_{tree} \leftarrow \text{softmax}_n \Big(\Bar{V}(n) \Big)$ \Comment*[r]{Compute the tree expansion policy}
    $node^* \leftarrow \text{sample} (\pi_{tree})$ \Comment*[r]{Sample the node to expand}
    \ForEach{$a \in Actions$;}{
        $h_{child} \leftarrow T_{\theta}(h_{node^*},a)$\; \Comment*[r]{Compute the next latent state}
        $r_{child} \leftarrow \mathcal{R}_\theta (h_{node^*},a)$\; \Comment*[r]{Compute reward for the transition}
        $createNode(h_{child}, r_{child})$\; \Comment*[r]{Create the corresponding child node}
    }
    $Open \leftarrow Open \cup \{child_a | child_a = getChild(node^*,a); \forall a \in A\} - node^*$\; \Comment*[r]{Update the open set}
    $itr \leftarrow itr + 1$\;
}

\tcp{\textbf{Backup phase}}
\ForEach{$node \in Tree$, iterating from leaf nodes to the root node;}{
    \If{$node$ is a leaf;\ }{
        $h_{node} \leftarrow getLatent(node)$ \Comment*[r]{Get latent state of $node$}
        $V(node) = \mathcal{V}_\theta(h_{node})$ \Comment*[r]{Compute value using Value module}
    }
    \Else{
        \ForEach{$a \in Actions$;}{
            $node_{child[a]} \leftarrow getChild(node,a)$ \Comment*[r]{Get child of $node$ that corresponds to action $a$}
            $h_{node} \leftarrow getLatent(node)$ \Comment*[r]{Get latent state of $node$}
            $r_{node} \leftarrow \mathcal{R}_\theta(h_{node},a)$ \Comment*[r]{Get reward using Reward module}
            $Q(node,\ a) \leftarrow  r_{node} + V\left( node_{child[a]}\right)$ 
        }
        $V(node) \leftarrow \max_a Q(node,a)$
    }
}
\Return{$Q(node_{root})$} \Comment*[r]{Return Q-value of the root node}
\end{algorithm}
\vspace{10pt}

\newpage
\subsection{Pseudo-code Explanation}
D-TSN initializes the search by converting the input state to its latent representation and proceeds in two phases: Expansion and Backup as detailed in the following sections.

\subsubsection{Initialization}
Given an input state \(s_{\text{root}}\), the algorithm begins by encoding this state into its latent representation \(h_{\text{root}}\) using an encoder \(\mathcal{E}_\theta\). This latent representation serves as the root node of the search tree.

\subsubsection{Expansion Phase}
\begin{itemize}
    \item The algorithm initiates a set of candidate nodes, termed `Open', starting with the root node. 
    \item In each iteration, the algorithm considers every node in the `Open' set, retrieves its latent state, and computes an interim value \(\Bar{V}(\text{node})\), which combines the cumulative rewards of the node's path with a value estimation from the value module \(\mathcal{V}_\theta\).
    \item Using the path values of nodes in `Open', the tree expansion policy, \(\pi_{\text{tree}}\), is computed. From this policy, a node, \(node^*\), is sampled for expansion.
    \item Each possible action from \(node^*\) results in the creation of a child node. This is achieved by leveraging the transition module \(\mathcal{T}_\theta\) to predict the latent state of the child and the reward module \(\mathcal{R}_\theta\) to determine the associated reward. After expansion, \(node^*\) is removed from `Open' and its children are added.
    \item This expansion process continues until a predetermined number of iterations, \texttt{MAX\_ITR}, is reached.
\end{itemize}

\subsubsection{Backup Phase}
\begin{itemize}
    \item Starting from the leaf nodes, the algorithm backpropagates value estimates to the root.
    \item For each leaf node, its value is directly computed from the value module \(\mathcal{V}_\theta\). For non-leaf nodes, the Q-value for each action is estimated by combining the reward for that action with the value of the corresponding child node.
    \item The value of a non-leaf node is set to the maximum Q-value among its actions.
\end{itemize}
    
\subsubsection{Output} 
The algorithm finally returns the Q-values associated with the root node, \(Q(node_{\text{root}})\), providing an estimation of the value of taking each action from the initial state.

This explanation provides a high-level view of the D-TSN algorithm's operation. By breaking down the search process into expansion and backup phases, the pseudo-code highlights how D-TSN incrementally builds the search tree and then consolidates value estimates back to the root.
\newpage
\section{Continuity of the Loss Function in TreeQN} \label{appendix-section-continuity}
In this section, we show that the Q-function represented by TreeQN is continuous in network's parameter space. 

Suppose we have two functions $f(x)$ and $g(x)$ which are continuous at any point $c$ in their domains.

\begin{lemma} \label{lemma-continuity-composition}
\textbf{Continuity of Composition}: The composition of two continuous functions, denoted as $f(g(x))$, retains continuity. (Theorem 4.7 in \cite{rudin1976principles})
\end{lemma}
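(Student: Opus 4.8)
The plan is to prove this standard real-analysis fact directly from the $\epsilon$--$\delta$ definition of continuity, mirroring Theorem 4.7 in \cite{rudin1976principles}. Fix a point $c$ in the domain of $g$, assume $g$ is continuous at $c$, and assume $f$ is continuous at the image point $g(c)$; for the composition $f \circ g$ to be well defined I would take as a standing assumption that the range of $g$ lies in the domain of $f$, so that $g(c)$ and every nearby $g(x)$ are legitimate inputs to $f$. The goal is to show that for every $\epsilon > 0$ there exists $\delta > 0$ such that $\|x - c\| < \delta$ forces $\|f(g(x)) - f(g(c))\| < \epsilon$.

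The key step is to chain the two continuity hypotheses in the correct order. First I would invoke continuity of $f$ at $g(c)$: given $\epsilon > 0$, there is $\eta > 0$ such that any $y$ in the domain of $f$ with $\|y - g(c)\| < \eta$ satisfies $\|f(y) - f(g(c))\| < \epsilon$. Next I would feed this $\eta$ as the target tolerance into continuity of $g$ at $c$: there exists $\delta > 0$ such that $\|x - c\| < \delta$ implies $\|g(x) - g(c)\| < \eta$. Setting $y = g(x)$ then composes the two implications, so that $\|x - c\| < \delta$ yields $\|g(x) - g(c)\| < \eta$, which in turn yields $\|f(g(x)) - f(g(c))\| < \epsilon$. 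This establishes continuity of $f \circ g$ at $c$, and since $c$ was an arbitrary point of the common domain, $f \circ g$ is continuous.

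There is no substantive obstacle here, as the result is elementary; the one point requiring care is the ordering of the quantifiers. The tolerance $\eta$ produced by the continuity of the outer map $f$ must be obtained first, because it is precisely this $\eta$ that serves as the $\epsilon$-level when applying the continuity of the inner map $g$. If a sequential formulation is preferred, the identical conclusion follows even more quickly from the sequential characterisation of continuity: any sequence $x_n \to c$ gives $g(x_n) \to g(c)$ by continuity of $g$, and then $f(g(x_n)) \to f(g(c))$ by continuity of $f$, which is exactly continuity of the composite at $c$.
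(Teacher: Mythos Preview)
Your proof is correct and is precisely the standard $\epsilon$--$\delta$ argument that Rudin gives in Theorem~4.7. The paper itself does not supply a proof for this lemma at all; it simply states the result and cites Rudin, so your write-up in fact goes beyond what the paper provides while matching the referenced source.
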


\begin{lemma} \label{lemma-continuity-sum}
\textbf{Continuity of Sum operation}: The result of adding two continuous functions, expressed as $f(x) + g(x)$, is a continuous function. (Theorem 4.9 in \cite{rudin1976principles})
\end{lemma}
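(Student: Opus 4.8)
The plan is to verify the $\epsilon$-$\delta$ definition of continuity directly for the sum $h(x) = f(x) + g(x)$ at an arbitrary point $c$ in the common domain, using the triangle inequality to combine the individual continuity estimates for $f$ and $g$. First I would fix $c$ and let $\epsilon > 0$ be given. Invoking the continuity of $f$ at $c$, I obtain a radius $\delta_1 > 0$ such that $|x - c| < \delta_1$ forces $|f(x) - f(c)| < \epsilon/2$; independently, the continuity of $g$ at $c$ yields $\delta_2 > 0$ with $|x - c| < \delta_2$ forcing $|g(x) - g(c)| < \epsilon/2$. The key move is to set $\delta = \min(\delta_1, \delta_2) > 0$, so that both estimates hold simultaneously on the neighbourhood $|x - c| < \delta$.

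With this choice, the remaining step is a single application of the triangle inequality, which collapses the two half-tolerances into the full budget $\epsilon$:
\[
|h(x) - h(c)| = \big|(f(x) - f(c)) + (g(x) - g(c))\big| \le |f(x) - f(c)| + |g(x) - g(c)| < \frac{\epsilon}{2} + \frac{\epsilon}{2} = \epsilon.
\]
Since $c$ and $\epsilon$ were arbitrary, this establishes continuity of $f + g$ at every point of the common domain, which is exactly the claim.

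An equally clean alternative route, which I would note as a remark, is the sequential characterisation of continuity: for any sequence $x_n \to c$, continuity of $f$ and $g$ gives $f(x_n) \to f(c)$ and $g(x_n) \to g(c)$, and the additivity of limits of convergent real sequences immediately yields $f(x_n) + g(x_n) \to f(c) + g(c)$. The main obstacle here is essentially nonexistent, as this is an elementary fact (Theorem 4.9 in \cite{rudin1976principles}) that could simply be cited; the only point requiring the slightest care is ensuring a single $\delta$ controls both summands at once, which is precisely why one takes the minimum of $\delta_1$ and $\delta_2$ and splits the target tolerance symmetrically as $\epsilon/2 + \epsilon/2$. I include the short argument above only for self-containedness, since Lemmas~\ref{lemma-continuity-composition} and~\ref{lemma-continuity-sum} are the building blocks for the continuity of the full depth-$d$ tree in Theorem~1.
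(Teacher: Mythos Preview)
Your argument is correct: the standard $\epsilon/2$-$\epsilon/2$ split together with $\delta = \min(\delta_1,\delta_2)$ is exactly the textbook verification, and the sequential alternative is also fine. The paper, however, does not give any proof of this lemma at all; it simply states the result and cites Theorem~4.9 in \cite{rudin1976principles}. So your write-up is strictly more self-contained than what appears in the paper, which treats the lemma as a black-box fact from real analysis. Either choice is acceptable here since the lemma is elementary; the only practical difference is that your version adds a few lines of routine detail that the paper deliberately omits.
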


\begin{lemma} \label{lemma-continuity-max}
\textbf{Continuity of Max operation}: Applying Max over two continuous functions, expressed as $\max(f(x), g(x))$, results in a function that is continuous.
\end{lemma}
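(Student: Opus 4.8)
The plan is to reduce the $\max$ operation to operations already known to be continuous by Lemmas~\ref{lemma-continuity-composition} and~\ref{lemma-continuity-sum}. The key observation is the pointwise algebraic identity
\begin{equation*}
\max\big(f(x), g(x)\big) = \tfrac{1}{2}\Big( f(x) + g(x) + \big| f(x) - g(x) \big| \Big),
\end{equation*}
valid for all real numbers. Given this, it suffices to show each term on the right-hand side is continuous and then invoke closure of continuity under addition and multiplication by the constant $\tfrac12$.

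First I would argue that $f(x) - g(x)$ is continuous: writing it as $f(x) + (-1)\cdot g(x)$, continuity of $-g$ follows by composing $g$ with the continuous map $y \mapsto -y$ and applying Lemma~\ref{lemma-continuity-composition}, and then Lemma~\ref{lemma-continuity-sum} gives continuity of the difference. Next, the absolute value function $y \mapsto |y|$ is continuous on $\mathbb{R}$, which is immediate from the reverse triangle inequality $\big| |y| - |c| \big| \le |y - c|$ (so $\delta = \epsilon$ works). Composing this with the continuous function $f - g$ and applying Lemma~\ref{lemma-continuity-composition} shows $\big| f(x) - g(x) \big|$ is continuous. Finally, adding the three continuous functions $f$, $g$, and $|f-g|$ via Lemma~\ref{lemma-continuity-sum} (applied twice) and scaling by $\tfrac12$ yields continuity of $\max(f,g)$, completing the proof.

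An alternative route, avoiding the identity, is a direct $\epsilon$--$\delta$ argument: fix a point $c$ and $\epsilon > 0$, choose $\delta$ so that $|f(x) - f(c)| < \epsilon$ and $|g(x) - g(c)| < \epsilon$ whenever $|x - c| < \delta$, and then verify $\big| \max(f(x),g(x)) - \max(f(c),g(c)) \big| < \epsilon$ using the elementary bound $\big| \max(a,b) - \max(a',b') \big| \le \max\big( |a - a'|, |b - b'| \big)$. I expect no genuine obstacle here: the only points needing a moment of care are justifying continuity of $y \mapsto |y|$ (in the main proof) or the elementary max-difference bound (in the alternative), both routine. I would present the identity-based argument as the primary proof, since it most directly reuses the lemmas already established in this section.
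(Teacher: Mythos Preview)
Your proposal is correct and takes essentially the same approach as the paper: both rely on the identity $\max(f,g) = \tfrac{1}{2}\big(f + g + |f-g|\big)$ and then appeal to closure of continuity under sums and absolute values. Your version is more detailed in justifying each step (continuity of $-g$, of $|\cdot|$, etc.) and adds an alternative $\epsilon$--$\delta$ route, but the core argument is the same.
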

\begin{proof} \label{proof-continuity-max}
Consider a function $h(x) = \max (f(x), g(x))$, and we aim to demonstrate that $h(x)$ is continuous. Now, $h(x)$ can be expressed as a combination of continuous functions: 
\begin{equation*}
    h(x) = \dfrac{f(x) + g(x) + |f(x) - g(x)|}{2}
\end{equation*}

Since sums and absolute values of continuous functions are continuous~\citep{rudin1976principles}, $h(x)$ is continuous. Hence, the maximum of two continuous functions is also a continuous function.
\end{proof}

Rewriting the \cref{theorem_continuity_of_q_function} with proof below:
\begin{theorem}
Given a set of parameterised modules that are continuous in the parameter space $\theta$, the Q-function computed by fully expanding a search tree to a fixed depth `$d$' by composing these modules and backpropagating the children values using addition and max operations is continuous in the parameter space $\theta$.
\end{theorem}

\begin{proof}
Let us represent the set of parameters, encoder module, transition module, reward module, and value module respectively as $\theta$, $\mathcal{E}_\theta$, $\mathcal{T}_\theta$, $\mathcal{R}_\theta$, and $\mathcal{V}_\theta$. These submodules are assumed to be composed of simple neural network architectures, comprising linear and convolutional layers, with the Rectified Linear Unit (ReLU) serving as the activation function. These submodules, therefore, are continuous within the parameter space.

We can subsequently rewrite the Q-value as the output of a full tree expansion as follows:
\begin{dmath}
    Q(s_0,a_0)  
    = Q(h_0,a_0)
    = r_0 + V(h_1)
\end{dmath}
where,
\begin{align} 
    h_0 &= \mathcal{E}_\theta(s_0) \label{proof-equation-encoder} \\
    r_t &= \mathcal{R}_\theta(h_t,a_t) \label{proof-equation-reward} \\
    h_{t+1} &= \mathcal{T}_\theta(h_t,a_t) \label{proof-equation-transition}\\
    Q(h_t,a_t) &= \mathcal{R}_\theta(h_t,a_t) + V(h_{t+1}) \\
    V(h_t) &= \label{proof-equation-value} 
    \begin{cases}
        \mathcal{V}_\theta(h_t) & \text{if } h_t \text{ is a leaf} \\
        \max_a \left(Q(h_t,a)\right) & \text{otherwise }
    \end{cases}
\end{align}
Given that $\mathcal{E}_\theta$, $\mathcal{R}_\theta$, and $\mathcal{T}_\theta$ are continuous, $h_0$, $r_t$, and $h_{t+1}$ in \cref{proof-equation-encoder,,proof-equation-transition} are similarly continuous (derived from \cref{lemma-continuity-composition}).

Further, $V(h_t)$ in \cref{proof-equation-value} can either be $\mathcal{V}_\theta(h_t)$, if $h_t$ is a leaf node, or $\max_a(Q(h_t,a))$ otherwise. In the first scenario, $V(h_t)$ retains continuity by assumption. In the second scenario, if $Q(h_t,a_t)$ is continuous, then $V(h_t)$ remains continuous as per \cref{lemma-continuity-max}.

Now, we show that $Q(h_t,a_t)$ is continuous using a recursive argument that for any node in the search tree, if the Q-values of all its child nodes are continuous, then its Q-value is also continuous. Q-value of an internal tree node $h_t$ can be written as $Q(h_t,a_t) = \mathcal{R}_\theta(h_t,a_t) + V(h_{t+1})$, where $h_{t+1}=\mathcal{T}_\theta(h_t,a_t)$ is the child node of $h_t$. Considering the base case, when the $h_{t+1}$ is a leaf node, then $Q(h_t,a_t) = \mathcal{R}_\theta(h_t,a_t) + \mathcal{V}_\theta(h_{t+1})$, which is continuous as per ~\cref{lemma-continuity-sum}. Consequently, $V(h_t)$ maintains continuity. Applying this logic recursively, the Q-value $Q(h_t,a_t)$ of all the tree nodes maintains continuity.

Thus, we can decompose the Q-value at the root node, $Q(s_0,a_0)$, as a composition of continuous functions, ensuring that the output Q-value, $Q(s_0,a_0)$, is continuous.
\end{proof}

Consequently, the Q-function represented by TreeQN is continuous in the parameter space. Furthermore, when this Q-function is combined with a continuous loss function, the resulting loss is continuous in the network's parameter space. 
\section{Derivation of the Gradient of the Expected Loss Function} \label{appendix_section_derivation_of_gradient}

Let us represent the Q-values predicted by D-TSN as $Q_\theta(s,a | \tau)$, which depends on the final tree $\tau$ sampled after $T$ trials of the online search. Let us denote the corresponding loss function on this output Q-value as $\mathcal{L}\Big(Q_\theta(s,a|\tau)\Big)$. Our objective is to compute the gradient of the expected loss value, averaging over trees sampled.

The gradient of expected loss, considering the expectation over the sampled trees, is derived as:
\begin{dmath*}
    \nabla_\theta L(s,a) 
    = \nabla_\theta \mathbb{E}_{\tau} \bigg[\mathcal{L}\Big(Q_\theta(s,a|\tau)\Big)\bigg] \\~\\
    = \nabla_\theta \sum_\tau \pi_\theta(\tau) \mathcal{L}\Big(Q_\theta(s,a|\tau)\Big) \\~\\
    = \sum_\tau \nabla_\theta \bigg[\pi_\theta(\tau) \mathcal{L}\Big(Q_\theta(s,a|\tau)\Big) \bigg] \\~\\
    = \sum_\tau \mathcal{L}\Big(Q_\theta(s,a|\tau)\Big) \nabla_\theta \pi_\theta(\tau) + \sum_\tau  \pi_\theta(\tau) \nabla_\theta \mathcal{L}\Big(Q_\theta(s,a|\tau)\Big) \\~\\
    = \sum_\tau  \pi_\theta(\tau) \mathcal{L}\Big(Q_\theta(s,a|\tau)\Big) \nabla_\theta \log \pi_\theta(\tau) + \sum_\tau  \pi_\theta(\tau) \nabla_\theta \mathcal{L}\Big(Q_\theta(s,a|\tau)\Big) \\~\\
    = \mathbb{E}_{\tau} \bigg[ \mathcal{L}\Big(Q_\theta(s,a|\tau)\Big) \nabla_\theta \log \pi_\theta(\tau) + \nabla_\theta \mathcal{L}\Big(Q_\theta(s,a|\tau)\Big) \bigg] \\~\\
    = \mathbb{E}_{\tau} \bigg[ \mathcal{L}\Big(Q_\theta(s,a|\tau)\Big) \nabla_\theta \log \prod_{t=1}^T \pi_\theta(n_t|\tau_t) + \nabla_\theta \mathcal{L}\Big(Q_\theta(s,a|\tau)\Big) \bigg] \\~\\
    = \mathbb{E}_{\tau} \bigg[ \mathcal{L}\Big(Q_\theta(s,a|\tau)\Big) \sum_{t=1}^T \nabla_\theta \log \pi_\theta(n_t|\tau_t) + \nabla_\theta \mathcal{L}\Big(Q_\theta(s,a|\tau)\Big) \bigg]
\end{dmath*}

Leveraging the telescoping sum trick, as elaborated in \cref{section_method_telescoping_sum}, the gradient of the expected loss can be rewritten as a lower-variance estimate:

\begin{equation*}
    \nabla_\theta L(s,a) = \mathbb{E}_{\tau} \bigg[  \sum_{t=1}^T \nabla_\theta \log \pi_\theta(n_t|\tau_t) R_t + \nabla_\theta \mathcal{L}\Big(Q_\theta(s,a|\tau)\Big) \bigg]
\end{equation*}

where
\begin{align*}
    R_t &= \sum_{i=t}^T r_i = \mathcal{L}_T - \mathcal{L}_{t-1} \\
    \mathcal{L}_t &= \mathcal{L}\Big(Q_\theta(s,a|\tau_{t})\Big), \text{ representing the loss value after the } t^{th} \text{ search iteration.}
\end{align*}

In practice, we utilize the single-sample estimate for the expected gradient, as elaborated in~\citet{SchulmanHWA15}
\begin{figure}[ht]
    \begin{center}
        \includegraphics[width=\linewidth]{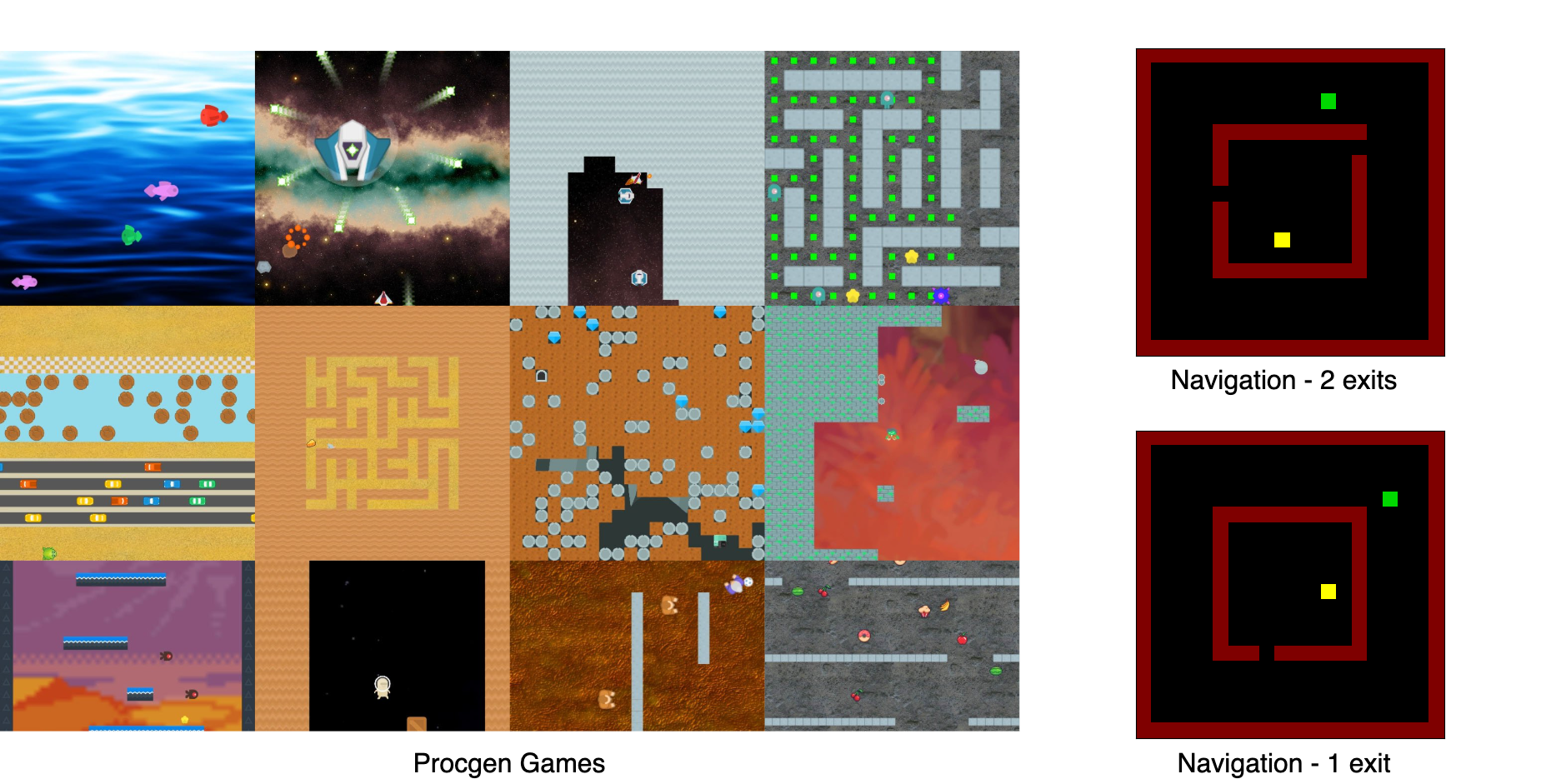}
    \end{center}
    \caption{A sample visualization of Procgen games (left) and Grid Navigation (right).}
    \label{figure_domains}
\end{figure}

\begin{figure}[ht]
    \begin{center}
        \includegraphics[width=\linewidth]{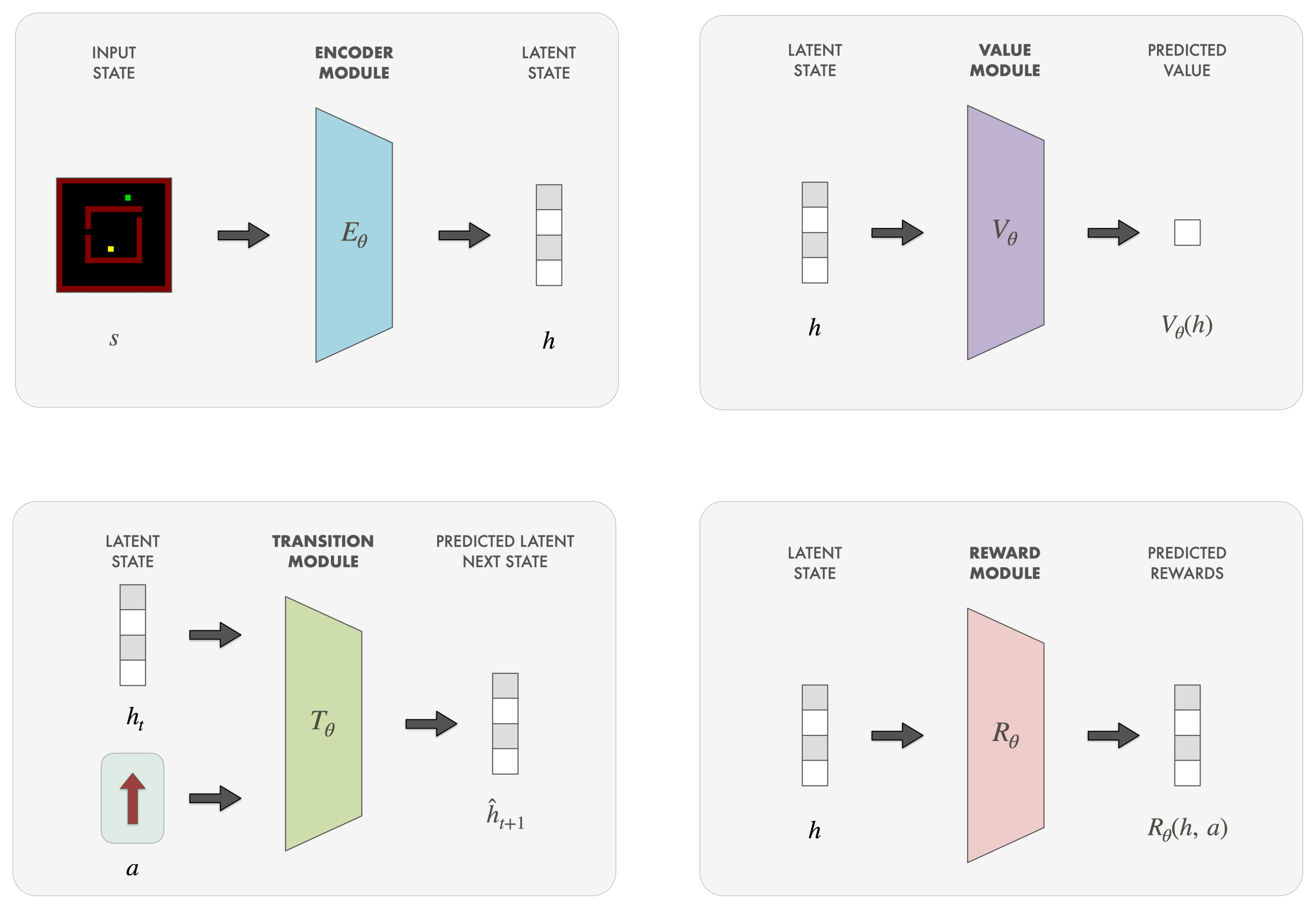}
    \end{center}
    \caption{An illustration of the learnable submodules in \dtsn} \label{figure_illustrations_submodules}
\end{figure}

\begin{figure}[ht]
\centering
\begin{minipage}{.5\textwidth}
  \centering
  \includegraphics[width=0.93\linewidth]{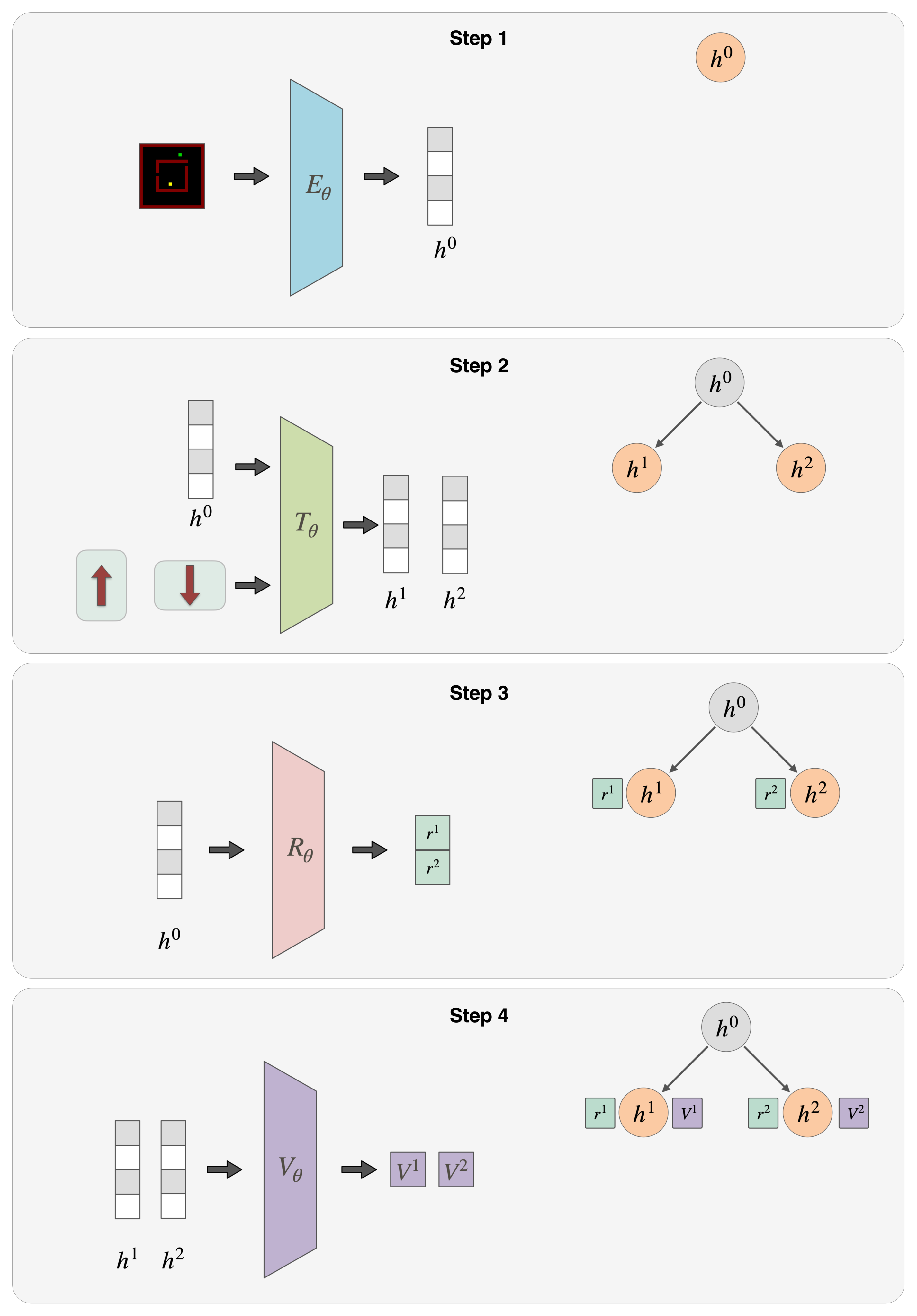}
\end{minipage}%
\begin{minipage}{.5\textwidth}
  \centering
  \includegraphics[width=\linewidth]{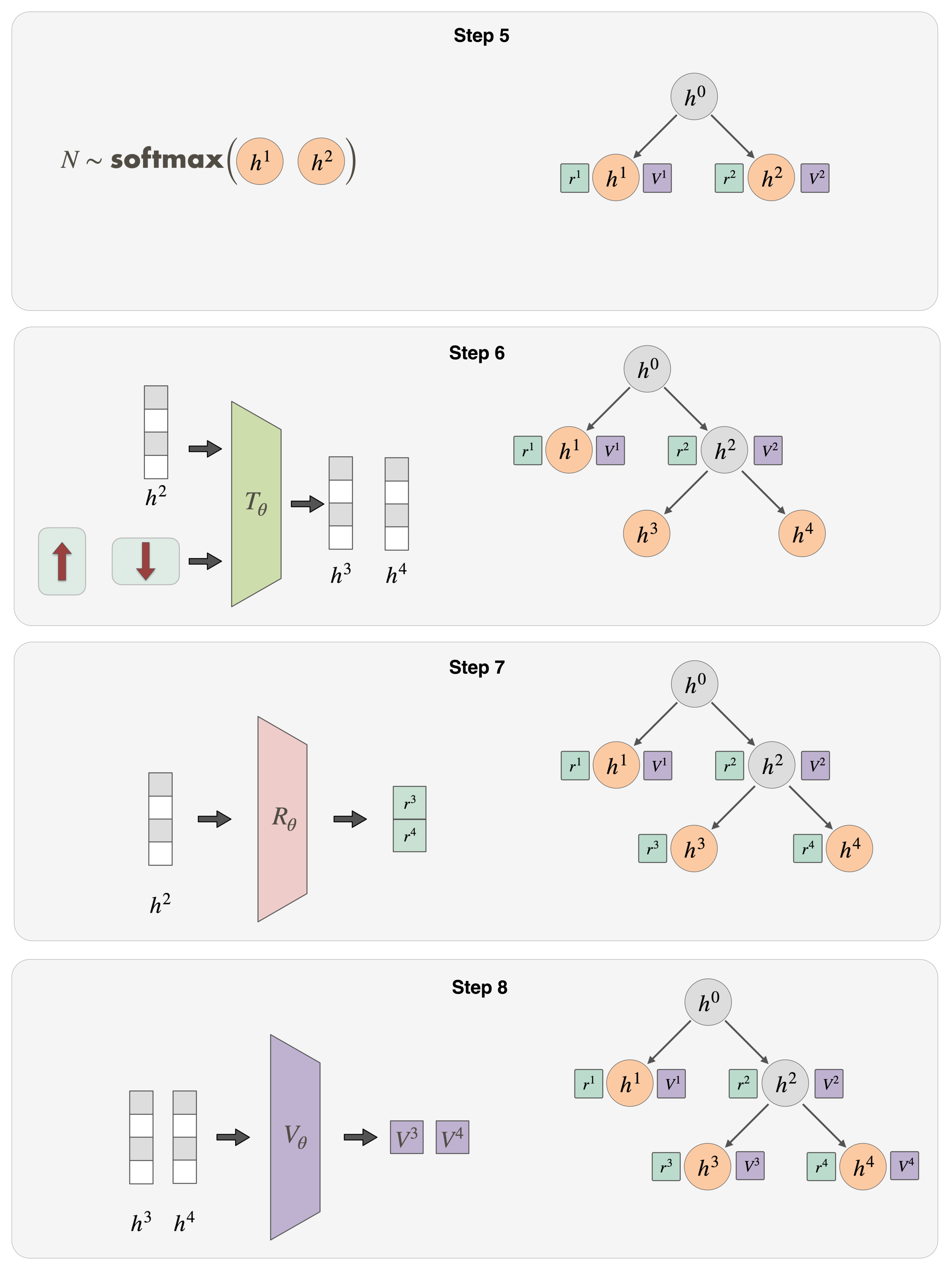}
\end{minipage}
\caption{An illustration of the Expansion Phase in \dtsn} \label{figure_illustrations_expansion}
\end{figure}

\begin{figure}[ht]
    \begin{center}
        \includegraphics[width=0.5\linewidth]{figures/4_backup.png}
    \end{center}
    \caption{An illustration of the Backup Phase in \dtsn} \label{figure_illustrations_backup}
\end{figure}

\begin{figure}[ht]
    \begin{center}
        \includegraphics[width=0.8\linewidth]{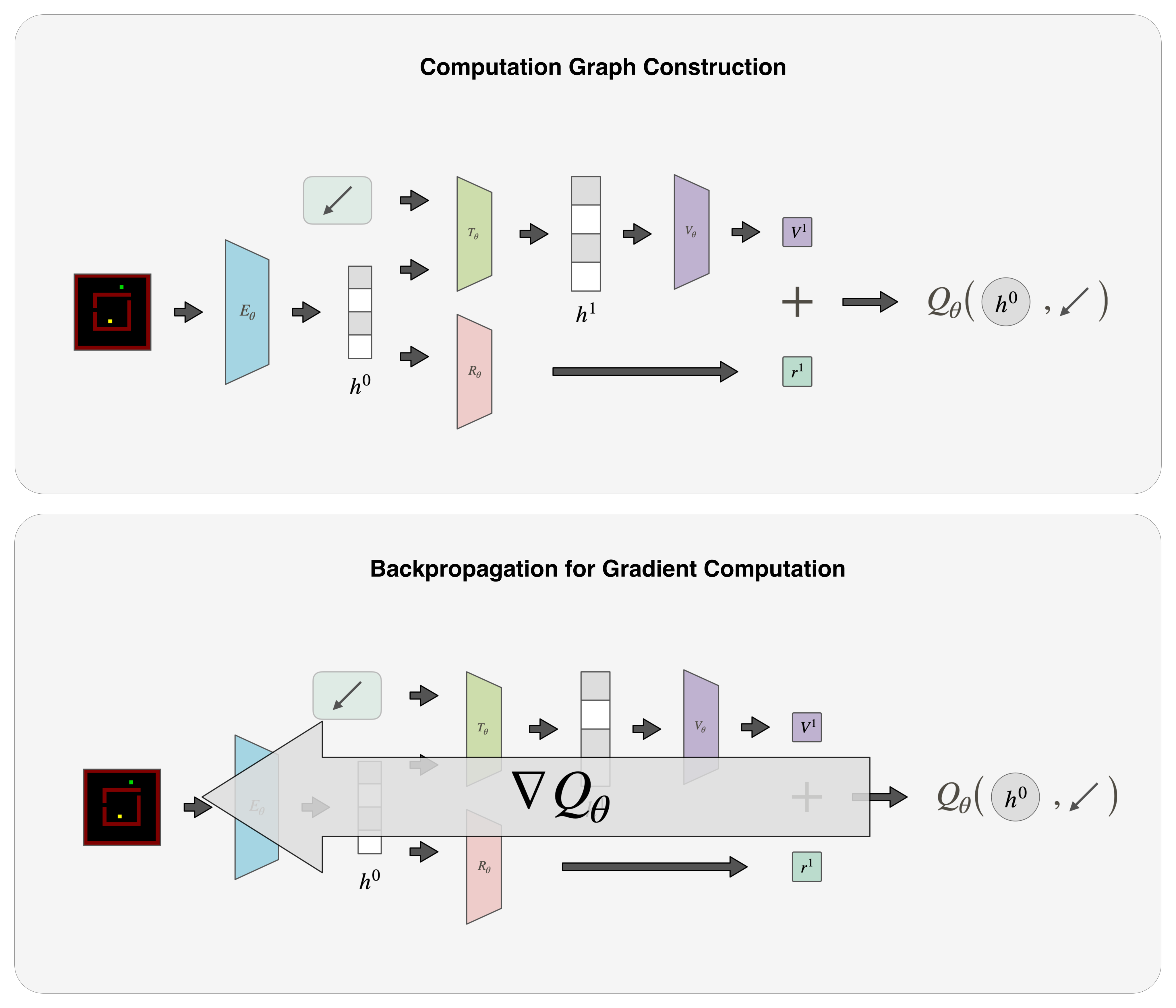}
    \end{center}
    \caption{An illustration of the computation graph construction in \dtsn} \label{figure_illustrations_graph}
\end{figure}
\section{Experiments} \label{appendix_experiments}
\subsection{Test Domains} \label{appendix_experiments_domains}
In our evaluations, we choose two distinct domains to assess and compare the sample complexity and generalization capabilities of various methods: a grid navigation task and the Procgen games. These domains are selected due to their diverse challenges, offering a comprehensive evaluation spectrum. A visual representation of these domains can be viewed in \cref{figure_domains}.

\subsubsection{Navigation} 
The grid navigation task serves as a foundational test that mimics the challenges a robot might face when navigating in a 2D grid environment. This environment provides both a quantitative metric and a qualitative visualization to understand an agent's capacity to generalize its policy. Specifically, this task involves a $20\times20$ grid with a central hall. At the beginning of each episode, the robot is positioned at a random point within this central hall. Simultaneously, a goal position is sampled randomly at a location outside the hall, challenging the robot to find its way out and reach this target. There are two variations of this task. The first provides the robot with a single exit from the central hall, while the second offers two exits. The single-exit hall scenario is similar to the two-exit scenario but requires a longer-horizon planning to successfully evade the walls to exit the hall and reach the goal.

\subsubsection{Procgen} 
Procgen is a unique suite consisting of 16 game-like environments, each of which is procedurally generated. This means that they are designed to present slightly altered levels every time they are played. Such design intricacy makes Procgen an ideal choice to test an agent's generalization capabilities. It stands in contrast to other commonly used testing suites, like the renowned Atari 2600 games~\citep{mnih2013playing,mnih2016asynchronous,badia2020agent57}. The diverse array of environments within Procgen emphasizes the pivotal role of robust policy learning. The environmental diversity in Procgen underlines the importance of robust policy learning for successful generalization. The open-source code for the environments is publicly accessible at \url{https://github.com/openai/procgen}.

\subsection{Learning Setup} \label{appendix_experiments_learning_setup}
D-TSN can serve as a drop-in replacement for popular Convolutional Neural Network (CNN) architectures. It can be trained using both online and offline reinforcement learning algorithms. In this work, we employ the offline reinforcement learning (Offline-RL) framework to focus on the sample complexity and generalization capabilities of D-TSN when compared with the baselines. 

Offline-RL, often referred to as batch-RL, is the scenario wherein an agent learns its policy solely from a fixed offline dataset of experiences, without further interactions with the environment. Offline-RL framework poses significant challenges, especially when it comes to the generalization capabilities of methods, even for seemingly straightforward tasks like navigation. Consider, for instance, that the optimal policy is employed to collect experiences from an environment. It would predominantly select the optimal actions at every state. This means that only a fraction of the vast state-action space would be covered in the dataset. As a result, the model might not learn about many interactions, such as what happens when it collides with a wall, due to the limited training data on such environment interactions. When this world model is employed in an online search scenario, the search process might unknowingly venture into out-of-distribution state space. These explorations, stemming from the model's limited generalization capabilities, can lead to overly optimistic value predictions, subsequently affecting the Q-values computation at the root node. In practical terms, the online search might mistakenly believe it can travel through a wall to reach its goal more quickly and would then run into it.

However, the D-TSN design offers a solution to this problem by jointly optimizing both the world model and the online search. During the training phase, when the online search strays into the out-of-distribution states, it might overestimate the value of these states. This overestimation would then influence the final output after the search. When such a mismatch between the post-search output and the expert action is detected, the gradient descent algorithm adjusts these overestimated values, effectively lowering them to align the final Q-values more closely with the expert action. As a direct consequence of this, by the end of the training phase, the search process will have effectively learned to ignore these the out-of-distribution states. Even if such expansion does occur due to $\epsilon$-greedy exploration during inference, the predicted value for the out-of-distribution state will be small and thus will have negligible impact on the Q-values of the root node due to the max operation during backup.

\subsection{Training Datasets} \label{appendix_experiments_training_datasets}
We employ a separate behavior policy to collect the offline training dataset. Here, the behavior policy can be \textit{optimal or sub-optimal}. An optimal policy generates a dataset with lower noise and a cleaner training signal, leading to a stable learning process. In contrast, a sub-optimal policy produces a noisier dataset, which consequently restricts the quality of the policy that the agent can learn. The selection of the behavior policy depends on domain-specific requirements and the resources available for data collection. This work explores datasets generated using both optimal (in the Navigation domain) and sub-optimal (in the Procgen domain) behavior policies.

The training dataset, $\mathcal{D}$, consists of trajectories generated using the behavior policy $\pi_B$, where each trajectory, $\tau_i$, is defined as a series of $T$ tuples, each comprising the state observed, action taken, reward observed, and the corresponding Q-value of the observed state, denoted as:
\begin{equation*}
    \tau_i = \Big\{(s_{t,i},\ a_{t,i},\ r_{t,i},\ Q_{t,i})\Big\}_{t=0}^{T}
\end{equation*}
The Q-value for state $s_{t,i}$ can be computed by adding the rewards obtained in the trajectory from timestep $t$ onwards, i.e. 
\begin{align*}
    Q_{t,i} &= Q^{\pi_B} (s_{t,i})\\
            &= r_{t,i} + r_{t+1,i} + r_{t+2,i} + ... r_{T,i} \\
            &= \sum_{k=t}^T {r_{k,i}}
\end{align*}
In order to evaluate the sample complexity and generalization capabilities of each method, we collected a small number of $1000$ trajectories for each test domain for our experiments.

\subsubsection{Navigation}
For our navigation task, which is relatively small in size, we are able to compute the optimal policy for any given state and configuration. We employ the value iteration algorithm for this purpose, as detailed by~\citet{sutton2018reinforcement}. At the beginning of each episode, we formulate a random passage through the central hall. Subsequently, we also randomly determine the starting position of the robot within the hall and its goal position outside of it. We collect a total of $1000$ expert trajectories for training. Each of these trajectories incorporates a sequence comprising states, actions, rewards, and Q-values observed throughout the episode.

\subsubsection{Procgen}
When it comes to Procgen, even though there isn't a public repository of pre-trained models, there exists an open-source code base for Phasic Policy Gradient (PPG) \citep{cobbe2020ppg}. With this in hand, we could effectively train a decent, \textit{but sub-optimal}, policy for every individual Procgen game starting from the ground up. We rely on the default set of hyperparameters for training a specific policy for each game. Using these policies, we gather sample trajectories for $1000$ successful episodes. Just like in the case of Navigation, each of these trajectories represents a sequence comprising states, actions, rewards, and Q-values observed throughout the episode.

\subsection{Loss Functions} \label{appendix_experiments_loss_functions}
As mentioned in the previous section, D-TSN can be trained using various online and offline RL methods. In this work, we are focusing on offline-RL framework and use Behavior Cloning to train the parameters of D-TSN. Behavior Cloning is a type of supervised learning where the objective is to make the agent's prediction closely approximate the actions taken by the behavior policy in the states collected in the training dataset. To achieve this, we minimize the mean squared error between the predicted and target Q-values. This loss, denoted as $\mathcal{L}_{Q}$, is expressed as:
\begin{equation} \label{appendix_equation_mse_loss}
\mathcal{L}_{Q} = \mathbb{E}_{(s_{t,i},\ a_{t,i},\ Q_{t,i}) \sim \mathcal{D}} \Big(
Q_\theta(s_{t,i},\ a_{t,i}) - Q_{t,i} \Big)^2
\end{equation}

Moreover, during the online search, the transition, reward, and value networks operate on the latent states. Consequently, it is important to ensure that the input to these networks is of a consistent scale, as suggested in \citep{schwarzer2020data,ye2021mastering}. To achieve this, we apply hyperbolic tangent (Tanh) normalization on the latent states, thereby adjusting their scale to fall within the range $(-1,1)$.
\begin{align*}
    h   &= tanh(x) \\
        &= \dfrac{e^{x} - e^{-x}}{e^{x} + e^{-x}} \quad \in (-1,1)
\end{align*}

\subsubsection{Auxiliary Loss for Out-of-Distribution Actions}
In the offline-RL setting, there's a risk that Q-values for out-of-distribution actions might be overestimated \citep{kumar2020conservative} as the behavior policy can only cover a limited part of the state-action distribution. To address this, we incorporate an additional CQL \citep{kumar2020conservative} loss which encourages the agent to adhere to actions observed within the training data distribution. This loss, $\mathcal{L}_{\mathcal{D}}$, is defined as:
\begin{equation}
\label{appendix_equation_cql_loss}
\mathcal{L}_{\mathcal{D}} = \mathbb{E}_{(s_{t,i},\ a_{t,i}) \sim \mathcal{D}} \Big( \log \sum_{a'} \exp \Big( Q_\theta(s_{t,i},\ a') \Big) - Q_\theta(s_{t,i},\ a_{t,i}) \Big)
\end{equation}

\subsubsection{Auxiliary Loss for Consistency in the World Model}
In order to avoid overburdening the latent states with extraneous information required to reconstruct the original input states, like in Model-based RL methods, we utilize self-supervised consistency loss functions as described in~\citep{schwarzer2020data,ye2021mastering}. These functions aid in maintaining consistency within the transition and reward networks. For example, let us assume a state $s_{t}$ and the subsequent state $s_{t+1}$ resulting from action $a_{t}$. The latent state representations for the environment states $s_{t}$ and $s_{t+1}$ can be computed as $h_{t}$ and $h_{t+1}$ respectively. The latent state encoding $\hat{h}_{t+1}$ can be predicted using the transition module, $\hat{h}_{t+1} = \mathcal{T}_\theta(h_{t},\ a_{t})$. We minimize the squared error between the latent representation $h_{t+1}$ and $\hat{h}_{t+1}$, to ensure that the transition function $\mathcal{T}_\theta$ provides consistent predictions for the transitions in the latent space. In accordance with the approach detailed in \citep{ye2021mastering}, we use a separate encoding network, referred to as the target encoder, to compute target representations.

\begin{equation} \label{appendix_equation_transition_loss}
\mathcal{L}_{\mathcal{T}_\theta} = \mathbb{E}_{(s_{t,i},\ a_{t,i},\ s_{t+1,i}) \sim \mathcal{D}} \left[ \left(\hat{h}_{t+1,i} - h_{t+1,i} \right)^2 \right]
\end{equation}

where 

\begin{align*}
    \hat{h}_{t+1,i} &= \mathcal{T}_\theta(h_{t,i},\ a_{t,i}) \\
    h_{t,i} &= \mathcal{E}_\theta(s_{t,i}) \\
    h_{t+1,i} &= \mathcal{E}_{\theta'} (s_{t+1,i})
\end{align*}

The parameters of the target encoder, $\theta'$, are updated using an exponential moving average of the parameters of the base encoder, $\theta$, as follows. 

\begin{equation*}
    \theta' \leftarrow \alpha \ \theta' + (1-\alpha) \ \theta
\end{equation*}

Notably, we refrain from adding projection or prediction networks, as done in \cite{schwarzer2020data} and \cite{ye2021mastering}, prior to calculating the squared difference as we did not observe any improvement by doing so.

Further, we also seek to minimize the mean squared error between the predicted reward $\mathcal{R}_\theta(h, a)$ and the actual reward observed $r$ in the training dataset $\mathcal{D}$.

\begin{equation} \label{appendix_equation_reward_loss}
\mathcal{L}_{\mathcal{R}_\theta} = \mathbb{E}_{(s_{t,i},\ a_{t,i},\ r_{t,i}) \sim \mathcal{D}} \bigg[ \Big( \mathcal{R}_\theta(h_{t,i},\ a_{t,i}) - r_{t,i} \Big)^2 \bigg]
\end{equation}

\subsubsection{Final Loss Function}
The final loss function to train D-TSN is a combination of Behavior Cloning loss and all the auxiliary losses defined above. It is given by:
\begin{equation}
\label{equation_final_loss}
\mathcal{L} = \lambda_1 \mathcal{L}_{Q} + \lambda_2 \mathcal{L}_{\mathcal{D}} + \lambda_3 \mathcal{L}_{\mathcal{T}_\theta} + \lambda_4 \mathcal{L}_{\mathcal{R}_\theta}
\end{equation}

where $\lambda_1, \lambda_2, \lambda_3$ and $\lambda_4$ serve as weighting hyperparameters.

Although, in our discussion above, we define the loss functions in the context of training D-TSN, the baseline methods also utilize a combination of the loss functions from \cref{appendix_equation_mse_loss,,appendix_equation_cql_loss,,appendix_equation_transition_loss,,appendix_equation_reward_loss} depending upon their specifications.

\subsection{Baselines} \label{appendix_experiments_baselines}
To evaluate the efficacy of Tree Search Network, we benchmark it against the following prominent baselines:
\begin{itemize}
    \item \textbf{Model-free Q-network}: This allows us to assess the significance of integrating the inductive biases into the neural network architecture.    
    \item \textbf{Model-based Search}: In this baseline, we assess the benefits derived from the joint optimization of the world model and the search algorithm by training the world models and value module independently of each other and utilizing them for online search.
    \item \textbf{TreeQN}: This comparison helps in highlighting the advantages of using a more advanced search algorithm that can execute a deeper search while maintaining similar computational constraints. 
\end{itemize}

\subsection{Implementation Details} \label{appendix_experiments_implementation_details}
In an effort to assess the distinctive elements of each method's design, we ensure uniformity in the number of parameters across all agents. This is achieved by integrating the submodules from D-TSN into the network architecture of each baseline. However, while the number of parameters are consistent, the way in which these submodules are utilized to construct the computation graph varies among the baselines. We provide their implementation details below:

\subsubsection{\dtsn}
D-TSN utilizes its submodules in alignment with the best-first search algorithm presented in \cref{section_method_tree_search}. For our empirical evaluations, we set the maximum limit for search iterations at 10. Throughout the training process, the computation graph, formulated via online search, is optimized to accurately predict the Q-values. This optimization serves a dual purpose: it not only refines the Q-value predictions but also facilitates robust learning for the submodules when they are employed in context of online search. The loss function utilized for training is:
\begin{equation*}
    \mathcal{L}_{D-TSN} = \mathbb{E}_{\tau} \Big[ \lambda_1 \mathcal{L}_{Q} + \lambda_2 \mathcal{L}_{\mathcal{D}} \Big] + \lambda_3 \mathcal{L}_{\mathcal{T}_\theta} + \lambda_4 \mathcal{L}_{\mathcal{R}_\theta}
\end{equation*}
We compute the gradient for this loss as described in \cref{equation_tree_policy_gradient_telescopic}.

\subsubsection{Model-free Q-network}
In this baseline, the submodules are utilized to perform a one-step look-ahead search. The input state at the root node undergoes an expansion using the world model, and Q-values are computed using the Bellman equation represented as $Q(s,a) = Rew(h,a) + Val(h')$, where $h = Enc(s)$ and $h' = Tr(h,a)$. Intriguingly, this structure does encapsulate a basic inductive bias via the 1-step look-ahead search. However, in keeping with its model-free characteristic, auxiliary losses aren't employed for training the transition and reward model. The loss function for this model is:
\begin{equation*}
    \mathcal{L}_{QNet} = \lambda_1 \mathcal{L}_{Q} + \lambda_2 \mathcal{L}_{\mathcal{D}}
\end{equation*}

\subsubsection{Model-based Search}
For this approach, we employ the best-first algorithm showcased in \cref{algorithm_dtsn}. However, there's a difference: the world model and the value module are trained independently, each focusing on their specific objectives. As outlined in \cite{ye2021mastering}, we incorporate self-supervised consistency losses defined in \cref{appendix_equation_transition_loss,,appendix_equation_reward_loss} as they improve the online search, even in cases where the world model is not jointly trained with the online search. The Q-values used for training are computed directly using the value module without performing online search during training. The loss function used for this approach is:
\begin{equation*}
    \mathcal{L}_{Search} = \lambda_1 \mathcal{L}_{Q} + \lambda_2 \mathcal{L}_{\mathcal{D}} + \lambda_3 \mathcal{L}_{\mathcal{T}_\theta} + \lambda_4 \mathcal{L}_{\mathcal{R}_\theta}
\end{equation*}

\subsubsection{TreeQN}
In this baseline, the starting step is encoding the input state to its latent counterpart with the Encoder module. Following this, a full-tree expansion, based on a predefined depth $d$, is performed using both Transition and Reward modules. The values at the leaf nodes are then backpropagated to the root node via the Bellman equation, as discussed in the Backup phase in \cref{section_method_tree_search_backup}. The root node Q-values serve as the final output, that is utilized for training. Given the exponential growth of TreeQN's computation graph with an increase in depth $d$, we choose a depth of 2 for both Procgen and navigation domains in our analysis, as used in TreeQN's original code base~\citep{farquhar2018treeqn}. Notably, greater depths, such as 3 or more, are infeasible since the resulting computation graph exceeds the memory capacity (roughly 11GB) of a typical consumer-grade GPU. The associated loss function, as adapted from the original paper, is:
\begin{equation*}
    \mathcal{L}_{TreeQN} = \lambda_1 \mathcal{L}_{Q} + \lambda_2 \mathcal{L}_{\mathcal{D}} + \lambda_3 \mathcal{L}_{\mathcal{R}_\theta}
\end{equation*}

It is important to note that every method is trained with same datasets using their respective loss functions. We fine-tune the hyperparameters, $\lambda_1,\lambda_2,\lambda_3 \text{ and } \lambda_4$, using grid search on a log scale.

\subsection{Issue with Baseline Normalized Score} \label{appendix_experiments_bns_issue}
Within the context of Atari games, the Baseline Normalized Score (BNS) is frequently utilized to evaluate the performance of agents. When human players are used as the baseline, it is often termed Human Normalized Score. The primary allure of BNS lies in its capacity to offer a relative assessment of an agent's performance, comparing it against a standard benchmark—this could be human players or even another agent.

One of the primary benefits of the BNS is its ability to provide a consistent metric across different games, addressing the difference in scale inherent in raw scores. By enabling the calculation of the average BNS across multiple games, we gain insight into the overall efficacy of an agent. This not only facilitates direct performance comparisons between diverse agents and methodologies but also paints a picture of how the agent's abilities stack up against human standards.

To derive the BNS, we start by logging the agent's raw score in an Atari game. This raw score is then normalized against a baseline score, derived from baseline agent's performance on the same game. By dividing the agent's score by the baseline's score (and sometimes subtracting the score of a random agent), we get a relative metric. Mathematically, this can be represented as:
\begin{equation}
    BNS(\pi) = \dfrac{S_\pi - S_R}{S_B - S_R}
\end{equation}
Here, $S_\pi, S_B$ and $S_R$ denote the raw scores of the agent, the baseline policy, and a random policy, respectively. Interpretation-wise, a BNS of 1 indicates parity with the baseline. Values exceeding 1 signify that the method outperforms, while those below 1 indicate that the method underperforms relative to the baseline.

Nevertheless, the BNS has its frailties. It inherently presumes the baseline policy will always surpass the performance of the random policy. But there can be instances, contingent on the environment or the specific baseline policy, where this isn't the case. In scenarios where the baseline policy underperforms the random policy, the BNS results in a negative denominator. This poses a predicament: even if our agent's policy performs better than the random policy, the BNS unfairly penalizes it. In our experiments with Procgen, we observed that for 2 out of the 16 games, namely heist and maze, the baseline policy underperformed compared to the random policy. Given these pitfalls, our evaluations pivot towards a more robust metric: the Z-score. The Z-score, often termed as the ``standard score", provides a statistical measurement that describes a value's relationship to the mean of a group of values. It is measured in terms of standard deviations from the mean. If a Z-score is 0, it indicates that the data point's score is identical to the mean score. Z-scores may be positive or negative, with a positive value indicating the score is above the mean and a negative score indicating it is below the mean.

\end{document}